\documentclass{amsart}
\usepackage[dvipsnames]{xcolor}
\usepackage{relsize}
\usepackage{amssymb,hyperref}
\usepackage{mathtools}
\usepackage{cleveref}
\usepackage{fullpage}
\usepackage{bm}

\usepackage{tikz} 
\usetikzlibrary{cd,arrows,shapes,shapes.multipart, intersections,decorations,decorations.markings,decorations.pathreplacing,fit}
\tikzset{thick/.style={line width=.6mm}}
	\tikzstyle{hugetensor}=[rounded rectangle,thick,draw=black,minimum width=40mm,minimum height = 3mm]
	\tikzstyle{squaretensor}=[rounded rectangle,thick,draw=black,minimum width=15mm,minimum height = 7mm]
	\tikzstyle{littletensor}=[circle,thick,draw=black,fill=blue!30,minimum size=.5mm]
	\tikzstyle{tinytensor}=[circle,thick,draw=black,fill=blue!30,inner sep=0pt,minimum size=6pt]

        \newcommand{\Path}{\operatorname{Path}}
        \newcommand{\len}{\operatorname{len}}
        
        \newcommand{\Max}{\textit{Max}}
        \newcommand{\Min}{\textit{Min}}
        \newcommand{\actions}{A}
                \newcommand{\states}{S}
\renewcommand{\leq}{\leqslant}
\renewcommand{\geq}{\geqslant}

\newcommand{\cont}{\textrm{co}}
\renewcommand{\stop}{\textrm{st}}
\theoremstyle{plain}
\newtheorem*{theorem*}{Theorem}
\newtheorem{proposition}{Proposition}
\newtheorem{theorem}{Theorem}

\theoremstyle{definition}
\newtheorem{definition}{Definition}
\newtheorem*{notation*}{Notation}

\newtheorem{remark}{Remark}

\DeclareMathOperator{\sgn}{sgn}

\newcommand{\R}{\mathbb{R}}

\usepackage{pgf}
\usepackage{tikz}
\usepackage{tikz-cd}
\usetikzlibrary{cd}
\usetikzlibrary{calc}
\usetikzlibrary{arrows}
\usetikzlibrary{shapes}

\title[Relu and softplus neural nets as zero-sum turn-based games]{Relu and softplus neural nets  as zero-sum turn-based games}

\usepackage[disable,colorinlistoftodos,bordercolor=orange,backgroundcolor=orange!20,linecolor=orange,textsize=scriptsize]{todonotes}

\author{St\'ephane Gaubert}
\author{Yiannis Vlassopoulos}

\address{SG: INRIA and CMAP, \'Ecole polytechnique, IP Paris, CNRS}
\email{Stephane.Gaubert@inria.fr}
\address{YV: ATHENA Research Center, Athens, Greece and IHES, Bures-sur-Yvette, France}
\email{yvlassop@gmail.com, yvlassop@ihes.fr}

\date{\today}

\begin{document}

\begin{abstract}
We show that the output of a ReLU neural network can be interpreted as the value of a zero-sum, turn-based, stopping game, which we call the ReLU net game. The game runs in the direction opposite to that of the network, and the input of the network serves as the terminal reward of the game. 
In fact, evaluating the network is the same as running the Shapley-Bellman backward recursion for the value of the game.
Using the expression of the value of the game as an expected total payoff with respect to the path measure induced by the transition probabilities and a pair of optimal policies, we derive a discrete Feynman-Kac-type path-integral formula for the network output. 
This game-theoretic representation can be used to derive bounds on the output
from bounds on the input,
leveraging the monotonicity of Shapley operators,
and to verify robustness properties  using policies as certificates.
Moreover, training the neural network becomes an inverse game problem: given pairs of terminal rewards and corresponding values, one seeks transition probabilities and rewards of a game that reproduces them. Finally, we show that a similar approach applies to neural networks with Softplus activation functions, where the ReLU net game is replaced by its entropic regularization.
\end{abstract}
\maketitle
{\small\tableofcontents}
\section{Introduction}
\subsection{Summary of results}
Neural networks are the engine behind the artificial intelligence revolution;
however they are for the most part treated as black boxes with data going in and out after a huge number of simple operations such as additions, multiplications and taking maxima.

This poses a fundamental challenge in the quest for control
over the possible behaviours of a given neural net. 
In this paper we take a step towards
elucidating the mathematical structure underlying ReLU neural networks. 

To that end, we prove that \textit{the map computed by a ReLU neural network coincides with the value of a two-player, turn-based, zero-sum, stopping  game}, thought of as a function of the terminal payoff.
In fact evaluating the neural network from input to output turns out to be the same thing as running the Shapley-Bellman backward recursion for the value of this game (Theorem~\ref{theorem 1}). We call this game the \textit{ReLU net game}.

Furthermore, \textit{the map computed by a Softplus neural network  coincides with the value of the ReLU net game when the latter is entropically regularized by adding the logarithm of the policy probability to its payoff} (\Cref{theorem 2}).
Analogously, evaluating the Softplus neural net is the same as running the Shapley-Bellman backward recursion for the entropically regularized game. We refer to this entropically regularized ReLU net game as the \textit{Softplus net game}. 

Using the fact that the value of the game is achieved by a pair of
optimal policies, we obtain
a Feynman-Kac-type representation of the network output as a discrete path integral, see~\eqref{e-path2} and~\eqref{Vop}.

Our constructions allow us to interpret a supervised learning problem
as an {\em inverse game problem}, in which a collection of terminal payoffs
and state values are known, and the parameters of the game (instantaneous rewards and transition probabilities) have to be inferred.

We also point out two applications of the game-theoretic interpretation.
The game representation of a ReLU neural
network provides an \emph{order-preserving lift} of the function computed
by the neural network. This monotonicity property
allows one to propagate lower and upper bounds on the input
to corresponding bounds on the output (\Cref{prop-bound}). This
approach can also be applied
to the verification of neural network properties. In particular,
we show that the policies
of the two players provide certificates that the input satisfies
a given property, or its negation, see~\Cref{prop-poly}.
In the simplest case, when the network is used as a single-output classifier
that accepts or rejects inputs based on thresholds applied to the output, Player \Max\ aims to certify that the property holds, whereas
Player \Min\ aims to the certify that the opposite property holds.

We now briefly describe the ReLU net game and our main theorems. 
\subsection{The ReLU net game}
First, recall the map computed by a ReLU net. We have $\operatorname{ReLU}(x)\coloneqq  \max(x,0)$.
Assume the network has $L$ layers and layer $l$ has $k_l$ neurons. 
\textit{We number the layers starting from the output of the neural network (layer $1$) to its input (layer $L$)}.
The weight matrix of layer $l$ is denoted by $W^l$ and there are bias vectors $b^l \in \mathbb{R}^{k_l}$ in each layer $l$. 
The input vector is $x\in \mathbb{R}^{k_L}$.

Then the total output function of the net is $f:\mathbb{R}^{k_L} \to \mathbb{R}^{k_1}$ where 
\begin{equation}
\label{NNo}
 f(x)=\max(W^1\max(W^{2}(\max \dots (\max(W^{L-1}(\max(W^Lx+b^L,0))+b^{L-1},0)\dots)) +b^{2},0)+b^1,0)
\end{equation}
and the $\max$  is applied coordinate-wise on a vector. 

The ReLU net game is played in the opposite direction to that the neural net is running, so that the value
given by the Shapley-Bellman backward recursion is computed in the direction of the neural net. 

There are two players,  \textit{Max} who aims to maximize the reward and \textit{Min} who aims to minimize it.
Every layer of the neural net corresponds to a stage of the game.
The game starts at the end of the neural net and proceeds towards its beginning.
Every neuron $(l,i)$ -- the $i$th neuron in the $l$th layer, counting from the end of the neural net -- gives rise to two game states, $(l,i,+)$ in which the maximizer plays and $(l,i,-)$ in which the minimizer plays. The possible actions at any state are two: \emph{stop} or \emph{continue}, where to stop means going to an absorbing, so-called ``cemetery'' state with zero instantaneous reward.  The weights $W^l_{i,j}$ of the ReLU net are used to define state transition probabilities: $P^l_{i+,j+}=P^l_{i-,j-}\coloneqq(\gamma^l_i)^{-1}{(W^l_{i,j})^+}$
and
$P^l_{i+,j-}=P^l_{i-,j+}\coloneqq(\gamma^l_i)^{-1}{(W^l_{i,j})^-}$ where
$\gamma^l_i\coloneqq\sum_j|W^l_{i,j}|$ is a discount factor (allowed to take values which may exceed one),
and $(\cdot)^\pm$ denotes the positive or negative part of a real number.
Thus, if the weight between two neurons is positive the same player (whether \emph{Max} or \emph{Min}) keeps playing and if it is negative, the player changes. 
If the  bias at a neuron $(l,i)$ is $b^l_i$
then the reward at state $(l,i,+)$ is $b^l_i$  and at $(l,i,-)$ it is $-b^l_i$. 
The terminal reward of the game at the states $(L,j,\pm)$ 
is given by $\pm x_j$ where $x_j$ is the $j$th input of the neural net.

A deterministic policy $\bm{\pi}$ for Player \textit{Max} is an assignment of either $0$ (signifying the stopping action) or $1$ (for the continue action) to all the states $(l,i,+)$. Analogously we can consider a deterministic policy $\bm{\sigma}$ for Player $\textit{Min}$. Given a pair of policies,
we can define a probability distribution on game trajectories. 
Indeed, note that since the only possible actions are stop or continue, a game trajectory is a sequence of states. The probability of such a trajectory 
is defined to be the product of all state transition probabilities along the trajectory. Moreover along such a trajectory, the rewards are accumulated. 
We denote by $V^{l,\bm{\pi},\bm{\sigma}}_{i+}(x)$ the expected value of accumulated rewards over all game trajectories starting at $(l,i+)$, when the terminal reward at state $(L,\pm j)$ is given by $\pm x_j$.
Then the value of the game at state $(l,i+)$ is the saddle-point (Nash equilibrium) value
$$V^l_{i+}(x)\coloneqq
\max_{\bm{\pi}}\min_{\bm{\sigma}}V^{l,\bm{\pi},\bm{\sigma}}_{i+}(x)=\min_{\bm{\sigma}}\max_{\bm{\pi}}V^{l,\bm{\pi},\bm{\sigma}}_{i+}(x).$$
Let $y^l_i$ be the output of the $i$th neuron in the $l$th layer of the neural net when the input is $x$.
We then prove in Theorem 1 (Section 4) that:
\begin{equation}
    y^l_i=V^l_{i+}(x)=-V^l_{i-}(x)
\end{equation}
In particular for the output of the net we have 
$f(x)_i=y^1_i=V^1_{i+}(x)=-V^1_{i-}(x)$.
Furthermore, the value is achieved by the optimal policies $\bm{\pi}^*(x)$ and $\bm{\sigma}^*(x)$ which are given by 
\begin{equation}
 \bm{\pi}^*(x)((l,i,+))=
\begin{cases}
  1, & \text{if } \sum_j W^l_{i,j}y^{l+1}_j +b^l_i \geq 0, \\
  0  & \text{if }\sum_j W^l_{i,j}y^{l+1}_j+b^l_i \leq 0 .
\end{cases}
\end{equation}
and 
 $\bm{\sigma}^*(x)((l,i,-))=1-\bm{\pi}^*(x)((l,i,+))$.

Since the value of the game is 
the piecewise-linear map computed by the neural net,
we see that the pair of optimal policies for a given terminal reward (i.e.\
input to the neural net) defines a linearity region of the map computed by the neural net. More precisely, a linearity region is a region where the pair of optimal policies of the two players does not change as we consider the value as a function of final reward. When the optimal policy changes we enter a new linearity region of the value i.e.\ the map computed by the neural net.  

These descriptions are summarized in Table \ref{tab:NN-Games1}.

\begin{table}[h]
  \centering
  \caption{\textbf{ReLU Neural net / Turn based zero-sum stopping Game, correspondence }}
  \label{tab:NN-Games1}
  \footnotesize
  \begin{tabular}
  {|l|l|}
    \hline
    \textbf{ReLU Neural net} & \textbf{ReLU net game} \\
    \hline
    layer $l$, counting from the end of the neural net& stage $l$ of game \\ \hline
   $(l,i)$: neuron $i$ in $l$th layer & states $(l,i+)$ and $(l,i-)$ 
   \\ \hline
    & Cemetery state (absorbing and with 0 reward)
   
   \\
   \hline
    $L$ : depth of net & horizon of game \\ \hline
    $b^l_i$: Bias in $(l,i)$ & reward $r^l_{i,+}\coloneqq b^l_i$ in $(l,i,+)$ and $r^l_{i,-}\coloneqq -b^l_i$ in $(l,i,-)$  \\ \hline
$\gamma^l_i\coloneqq\sum_j|W^l_{i,j}|$ & Discount factor
\\ \hline
 &Actions at any state: stop or continue (stop means go to cemetery state) \\ \hline 
    $W^l_{i,j}$: Weight
    & Transition Probabilities $P^l_{i+,j+}=P^l_{i-,j-}\coloneqq (\gamma^l_i)^{-1}{(W^l_{i,j})^+}$\\
&\text{ and }
$P^l_{i+,j-}=P^l_{i-,j+}\coloneqq(\gamma^l_i)^{-1}{(W^l_{i,j})^-}$\\ \hline
    Input $x=(x_1, \dots, x_n)$ to the net & $\pm$ terminal reward of the game, at states $(L,1,\pm),\dots (L,n,\pm)$\\ \hline
    $y_i^l$: the output of neuron $(l,i)$ & $V^l_{i,+}(x)$: Value at game state $(l,i,+)$\\ \hline
    $-y_i^l$ & $V^l_{i,-}(x)$:  Value at game state $(l,i,-)$\\  \hline
    Output of net & Value $V^1_{i,+}(x)$ of initial state of game  \\ \hline
    Linearity Region of map computed by NN & Pair of policies that are optimal for a terminal reward\\ \hline
    Training & Inverse game problem 
     \\ \hline

  \end{tabular}
\end{table}

Recall that stochastic two-player games generalize Markov decision processes, which can be thought of as games with a single player. The Shapley-Bellman equation in that case reduces to the MDP Bellman equation. In fact \textit{in the case where all the weights of the neural net are positive, the game described above collapses to a MDP and the output of the neural net is the value of that MDP}. 

\subsection{The Softplus net game}
By applying an entropic regularization to the game, namely adding the log of the policy probability to the payoff at every state, we obtain a game whose Shapley-Bellman backward recursion is the same as the outcome of a Softplus neural net. 
\textit{The value of the game can then be interpreted as a free energy and the optimal policy is the Gibbs measure.} 

This construction generalizes the previous one since we recover the ReLU case by sending the temperature parameter that appears with the entropic regularization to zero.
We explain this in Section 8. This is illustrated in~\Cref{tab:NN-Games2}.

\begin{table}[ht]
 \centering
  \caption{\textbf{Softplus Neural net / Entropy-regularized zero-sum turn based stopping Game, correspondence }}
  \label{tab:NN-Games2}
\footnotesize%
  \begin{tabular}
  {|l|l|}
    \hline
    \textbf{Softplus Neural nets} & \textbf{Entropy-regularized Games}  \\
    \hline
    layer $l$,  counting from the end of the neural net & stage $l$ of the game \\  \hline
    $L$ : depth of net& horizon of the game
    \\ \hline
    neuron $(l,i)$& game states $(l,i,+)$ and $(l,i,-)$ \\
    \hline
     & Cemetery state (absorbing and with 0 reward)
    
\\ \hline
 Softplus parameter $\tau$ & entropy parameter $\tau$ \\  \hline
    $b^l_i$: Bias at neuron $(l,i)$ & \begin{tabular}{l} reward given  policy $\bm{\pi}$, $R^l(l,i+,\cont)\coloneqq b^l_i-\tau \log(\pi^l(\cont|i+))$,\\
$R^l(l,i+,\stop)\coloneqq-\tau \log(\pi^l(\stop|i+))$\\
$R^l(l,i-,\cont)\coloneqq-b^l_i+\tau \log(\pi^l(\cont|i+)),
R^l(l,i-,\stop)\coloneqq\tau \log(\pi^l(\stop|i+))$\end{tabular}
     \\ \hline
$\gamma^l_i\coloneqq\sum_j|W^l_{i,j}|$ & Discount factor
     \\  \hline
     $W^l_{i,j}$: Weight, 
                                  & Transition Probabilities $P^l_{i+,j+}=P^l_{i-,j-}\coloneqq
                                    {(W^l_{i,j})^+}$\\
&\text{ and }
$P^l_{i+,j-}=P^l_{i-,j+}\coloneqq(\gamma^l_i)^{-1}{(W^l_{i,j})^-}$  
     \\ \hline 
     Input $x=(x_1,\dots x_n)$ to the net & Terminal reward of the game  \\ \hline
    
$y^l_{i,\tau}$: output of neuron $(l,i)$  & Value $V^l_{i+,\tau}(x)$ at game state $(l,i,+)$  
\\  \hline
$-y^l_{i,\tau}$ & Value $V^l_{i-,\tau}(x)$ at game state $(l,i,-)$  \\ \hline  
Output of the net & Value $V^1_{i+,\tau}(x)$ of the initial state $(1,i,+)$ of the game
    \\ \hline
    Training & Inverse game problem 
    \\ \hline
  \end{tabular}
\end{table}

\subsection{Related work}
Game-theoretic representations of functions have appeared in different contexts. Evans established in~\cite[Lemma~4.2]{evans} a general minimax representation theorem for Lipschitz Hamiltonians. In that way, abstract Hamilton-Jacobi PDE can be effectively interpreted as dynamic programming equations of differential games. The infinity Laplacian is a remarkable example
of a concrete stochastic game (``tug of war'') hidden behind a nonlinear PDE,
as shown by Peres, Schramm, Sheffield, and Wilson~\cite{Peres2008}.
Kohn and Serfaty showed that the PDE governing
mean curvature motion can be represented by a deterministic game~\cite{Kohn2005}. In the discrete time setting, Kolokoltsov used Evans' result to establish a game representation of order preserving sup-norm nonexpansive mappings, see~\cite{kolokoltsov,maslovkololtsov95}.
A general minimax representation theorem for nonexpansive mappings, with a game interpretation, appeared in~\cite{1605.04518}.
This should also be compared with a result of Ovchinnikov~\cite{ovchinnikov},
showing that a continuous piecewise-linear functionts admits
a finitely described minimax representation.

Another series of works interprets neural networks in terms
of tropical or ``piecewise-linear'' geometry.
Zhang, Naitzat, and Lim~\cite{zhang2018} showed that the map realized by a neural network
can be written as the difference of two tropical polynomial functions (with real exponents). This is further elaborated in~\cite{Maragos1, Maragos2, Maragos3,Kordonis2025}. This differs from the nested minimax representation implied by~\Cref{theorem 1}. The specifity of the representation in~\Cref{theorem 1} is its
{\em monotone character}, with respect to the coordinate-wise order,
  will all weights in~\eqref{e-Val plus}-\eqref{e-Val minus} nonnegative.
Results from polyhedral geometry (on the number of vertices of Minkowski sums)
allow one to bound the number of
linearity regions of maps defined by ReLU neural networks, see in particular
Zhang, Naitzat and Lim, {\em op.\ cit.}, and Mont\'{u}far, Ren and Zhang~\cite{montufar}.
Linearity regions have a natural interpretation in terms
of pairs of policies in the associated ReLU game. 

The verification of neural networks, discussed in~\Cref{sec-certificates},
is currently a topic of intense interest~\cite{Huangsurvey2020}. Efficient techniques rely
on (nonconvex) mathematical programming methods; see~\cite{crown}
and the references therein. We also note that the idea of {\em policy iteration}
has been applied to program verification, see~\cite{adjegaubertgoubault10}.

As said above, our results allow us to interpret the training of a ReLU neural network
as an inverse game problem; this is a two-player version of the inverse optimal control problem studied in various contexts, see e.g.~\cite{levine}.
\subsection{Organization of the paper}
Sections~2 and 3 are introductions to Markov decision processes (MDP) and zero-sum games respectively, explaining the basic structures and the Shapley-Bellman backward recursion equations which serve as our main tool.
Section~4 explains in detail the ReLU net / ReLU game correspondence.
Note that a MDP can be viewed as a single player game; it corresponds to a ReLU net with only positive weights.
Section~5 provides some applications of this correspondence.
Sections~6 and 7 are the analogues of Sections~2 and 3 where we add entropic regularization. Finally Section 8 explains that adding entropic regularization to the ReLU net game gives the Softplus net game which reproduces Softplus nets.

\subsection{Acknowledgments} YV would like to thank Michael Douglas and Maxim Kontsevich for useful conversations. He also would like to thank IHES for providing excellent working conditions. 

\section{Basics of Markov decision processes}
We first  start by recalling the special case of a one player stochastic game, namely a Markov decision process (MDP). The materials of this section
are standard, we refer the reader to~\cite{whittle86,puterman2014markov,Lasserre} for background. We explain what is a MDP and how to compute its value together with optimal policies. We will be explicit in this simpler case in order to establish the basic equations.
The 2-player extension will be the subject of the next section.  
Note however that, as explained below, \textit{even the simpler MDP case actually corresponds to the special case of a ReLU neural net all of whose weights are positive}.
\begin{definition}\label{def-MDP}
A MDP with finite horizon $T$ is a tuple $(S,A,P,r,\gamma,\phi,T)$ where 
$S$ is a finite set of states,
$A=(A(s))_{s\in S}$ is a family such that $A(s)$ is the set of possible actions in state $s$, $P_t(s'|s,a)$ is the probability distribution for moving to any other state $s'$ when starting at state $s$ and taking action $a$ at stage $t$,
$r_t(s,a,s') \in \mathbb{R}$ is the reward for taking action $a$ at the same stage, while at state $s$ and arriving at stage $s'$,
$\gamma_t(s)>0$ is the discount factor in state $s$ at stage $t$,
$T$ is the number of stages of the process (the horizon) and $\phi(s)$ is the terminal reward at time $T$ .
\end{definition}
For every choice of $(t,s,a)$, we require that $\sum_{s'}P_t(s'|s,a)=1$.

For simplicity of exposition, we assume here that that for all $s\in S$,
the set of actions $A(s)$ is finite. However, the results which follow carry over, with straightforward changes,
to the case in which $A(s)$ is a separable, metrizable and compact topological space, with the reward and the distribution probability depending continuously on the action $a\in A(s)$, see e.g.~\cite{Lasserre} (we will need such a general setting when considering entropically regularized MDP).
We define a \textit{randomized policy} %
to be a map $\pi:S \to \cup_{s\in S} \Delta(A(s))$,
where $\Delta(A(s))$ denotes the simplex generated by the action space $A(s)$,
i.e., the set of probability measures over $A(s)$.
It gives the probability of different possible actions available at a given state $s \in S$. 
If the probability distribution is supported on a single action, it is called a deterministic policy.

Given a sequence of (randomized) policies $\bm{\pi}=(\pi_1,\dots,\pi_{T-1})$, 
we define a \textit{probability measure on state-action paths}, namely on MDP trajectories of the form
\begin{equation} \label{MDP path}
\alpha\coloneqq(s_t,a_t,s_{t+1},a_{t+1},\dots s_{T-1},a_{T-1},s_T), \qquad s_t,\dots,s_T \in S, \; a_k \in A(s_k) \text{ for }t\leq k\leq T-1\enspace ,
\end{equation}
by
\begin{equation}
\label{prob MDP path}
P^{\bm{\pi}}(\alpha|s_t)\coloneqq\prod_{i=t}^{T-1}\pi_i(a_i|s_i)P_t(s_{i+1}|s_i,a_i)\enspace.
\end{equation}
The discounted reward accumulated along the path $\alpha$ is
defined by:
    \begin{equation}
        r(\alpha)\coloneqq r_t(s_t,a_t)+\gamma_t(s_t)r_{t+1}(s_{t+1},a_{t+1})+\dots +
        \big(\prod_{k=t}^{T-2}\gamma_k(s_k))r_{T-1}(s_{T-1},a_{T-1})+\big(\prod_{k=t}^{T-1}\gamma_k(s_k)) \phi(s_T)\enspace .
        \label{eq-r(alpha)}
      \end{equation}
      We do not require here that $\gamma_t\leq 1$ as is customary in applications of MDP to mathematical economy. This will be necessary since we will have to scale the weights of the neural net in order to produce probabilities.
We then
define $V^{t,\bm{\pi}}_s$, the stage $t$ \textit{value function} evaluated at state $s$ under the sequence of policies
$\bm{\pi}$, to be the expected value of the sum of discounted rewards along all paths from state $s$ till the end of horizon:
\begin{equation}
  \label{MDP value given policy}
      V^{t,\bm{\pi}}_s\coloneqq E_s^{\bm{\pi}} r(\alpha)
      =\sum_\alpha P^{\bm{\pi}}(\alpha|s_t)r(\alpha)  
      \enspace ,
    \end{equation}
    so that $E_s^{\bm{\pi}}$ denotes the expectation with respect to the probability measure induced by the sequence of policies $\bm{\pi}$, over the set
    of paths $\alpha$ of the form~\eqref{MDP path} with initial state $s_t=s$.
    In particular, if $t=T$ is the terminal time, we have
    \begin{equation}
    \label{MDP terminal val}
    V^{T,\bm{\pi}}_s=\phi(s) \enspace,
    \end{equation}
the final reward $\phi$ playing the role of a boundary condition.
We see from~\eqref{eq-r(alpha)} and~\eqref{MDP value given policy} that $V^{\bm{\pi}}$ is a discrete path integral.
  We now have  the backward Kolmogorov recursion.
 \begin{equation}
 \label{MDP back Kolm}
V^{t,\bm{\pi}}_s
= \sum_{a\in A(s)} \pi_t(a|s)
  \Bigl[
    r(s,a)
    + \sum_{s'\in {S}} \gamma_t(s) P_t(s'\mid s,a)\,V^{t+1,\bm{\pi}}_{s'}
  \Bigr].
 \end{equation}
 See e.g.~\cite[Chap~4.2]{Norris1997MarkovChains} for background.

This backward recursion is the discrete analogue of the backward Kolmogorov equation and we  see that the solution as expected from the Feynman-Kac formula, is a discrete path integral given by the expectation value in Eqn~\eqref{MDP value given policy}.
The recursion in Eqn~\eqref{MDP back Kolm} describes the change of value as we make one step back (from $t+1$ to $t$) in the MDP states. The path integral arises when we iterate the recursion to arrive to the maximal number of steps from stage $t$ all the way to the horizon of the MDP.
To compute a solution we start from the final reward in Eqn~\eqref{MDP terminal val} and we proceed backward.

Moreover we define the \textit{value of the MDP} at stage $t$ with initial state $s$ to be  
\begin{equation}
V^t_s\coloneqq\max_{\bm{\pi}} V^{t,\bm{\pi}}_s,
\end{equation}
where the maximum is taken over sequences of policies.
We then have for $V^t_s$
  the basic equation of 
 dynamic programming, the Bellman equation, which is also a backward recursion.
    The start of the recursion is again the final reward given by Eqn~\eqref{MDP terminal val}. 
Then, the Bellman equation is
\begin{equation}
\label{MDP Bellman 2}
  V^t_s=
\max_{\pi\in \Delta(A(s))} \sum_{a\in\actions(s)}  \pi(a|s)
  \Bigl[
    r(s,a)
    + \sum_{s'\in\states} \gamma_t(s)P(s'\mid s,a)\,V^{t+1}_{s'}
  \Bigr].
\end{equation}
Note that Eqn~\eqref{MDP Bellman 2} is equivalent to 
\begin{equation}
\label{MDP Bellman}
V^t_s
= \max_{a\in\actions(s)}
\Bigl[
    r(s,a)
    + \sum_{s'\in\states}   \gamma_t(s) P(s'\mid s,a)\,V^{t+1}_{s'}
  \Bigr],
\end{equation}
where the maximum is taken over the set of actions,
since the function we are maximizing is linear and continuous in $\pi$,
and therefore the $\max$ over the simplex $\Delta(s)$ will occur at extreme points, which are precisely Dirac measures (Theorem~15.9 in~\cite{Aliprantis}). Therefore instead of taking the max over distributions $\pi$ over actions, we can take the max over actions. 

Equation \eqref{MDP Bellman} expresses the Bellman optimality principle, namely that if a policy is optimal for the whole horizon then it will also be optimal for the part of the MDP from any stage till the end. 

Note also that for the finite horizon problem considered here, we can construct step by step the optimal policy, whose existence therefore is guaranteed.
An optimal policy at time $t$ satisfies
\begin{equation}
\label{MDP opt policy}
    \pi^*_t(s)\in \arg\max_{a\in A(s)}
      \Bigl[r(s,a)
    + \sum_{s'\in\states} \gamma_t(s) P_t(s'\mid s,a)\,V^{t+1}_{s'}\Bigr] \enspace .
\end{equation}
Equation \eqref{MDP Bellman}, for the value $V^t_s$ is the discrete version of the Hamilton-Jacobi-Bellman equation from control theory. 
Given a sequence of policies
$\bm{\pi}$, we define a nonstationary Markov chain
on the set of states $\states$, as well as the expected instaneous payoff,
\begin{equation}
\label{MDP state prob}
P_t^{\bm{\pi}}(s'|s)\coloneqq\sum_a\pi_t(a|s)P_t(s'|s,a),\qquad
r(s)\coloneqq\sum_a\pi_t(a|s)r(s,a)\enspace .
\end{equation}
The recursion Eqn~\eqref{MDP back Kolm} can then be written in condensed form as:
\begin{equation}
V^{t,\bm{\pi}}_s
=r(s)
+ \sum_{s'\in\states} \gamma_t(s) P_t^{\bm{\pi}}(s'\mid s)\,V^{t+1,\bm{\pi}}_{s'} \enspace .
\end{equation}

There is a particular kind of MDP called a \textit{stopping MDP} which has only two possible actions at any state: ``Stop or Continue''. We realize the stopping action by adding a so-called \textit{cemetery state} $\bot$ which is absorbing and has zero reward. Absorbing means that once a player arrives there then they stay there forever. We will use its zero-sum game generalization in our interpretation of ReLU nets in Section 4.

\section{Basics of Repeated zero-sum Games}
We now explain the basics of the theory of repeated zero-sum games which will be used in the next section in order to describe the game representing a ReLU neural net.
The main novelty by comparison with the MDP case considered above is that, instead of the optimal value corresponding to a $\max$ over policies, it will be a saddle-point value, namely a max over policies of one player and a min over policies of the other player. This is a special case of Nash equilibrium value.
\subsection{Concurrent games}
We start with a simple extension (with state-dependent discount factor and finite horizon) of the model originally introduced by Shapley~\cite{shapley_stochastic}, see~\cite{sorin_repeated_games,solan} for recent presentations.
\begin{definition}[Finite-Horizon Zero–Sum Game]
A finite-horizon zero–sum game is defined by the tuple
\[
(S,\; A^1,\;A^2,\;P,\;r,\;\gamma,\;\phi,\; T),
\]
where $S$ is a finite set of states, $T$ is the horizon, $\gamma$ a discount factor, $\phi$ the terminal reward, as per~\Cref{def-MDP},
and
\begin{itemize}
  \item[--] $A^1=(A^1(s))_{s \in S}$, $A^2=(A^2(s))_{s\in S}$ are families
    such that for all states $s\in S$, $A^1(s)$ and $A^2(s)$ are the sets of possible actions for Player 1 (\textit{Max}, the maximizer) and Player 2 (\textit{Min}, the minimizer) in state $s$, respectively,
  \item[--] $P_t(s'\mid s,a^1,a^2)$ is the probability of transitioning to state $s'$ if the pair of actions $(a^1,a^2)$ is chosen in state $s$ at stage $t$ by the two players;
  \item[--] $r_t(s,a^1,a^2)$ is the stage-$t$ reward of Player \textit{Max}, and $-r_t(s,a^1,a^2)$ is the reward of player \textit{Min}, under the same circumstances.
\end{itemize}
\end{definition}
Again, we assume for simplicity that every set $A^i(s)$ with $i\in\{1,2\}$ and $s\in S$ is finite, referring the reader to~\cite{sorin_repeated_games} for the case of compact action spaces.
We assume that at any time $t$, both players are informed of the current state. In the original model of Shapley (often called {\em concurrent games}, as opposed to the more special turn-based games defined in the next section),
they both play simultaneously (neither one waits to see the other move).

A randomized policy for each player is a map which for any state assigns a probability distribution over all possible actions at that state.
Namely a policy for \textit{Max} is a map 
$\pi:S \to \cup_{s\in S} \Delta(A^1(s))$ such that $\pi(s)\in \Delta(A^1(s))$
for all $s\in S$, and a policy for \textit{Min} is a map
$\sigma:S \to \cup_{s\in S} \Delta(A^2(s))$ satisfying the analogous condition.

A game trajectory (state action path) is now of the form:
\begin{equation}
\label{game trajectory}
\alpha\coloneqq(s_t,a^1_t,a^2_t,s_{t+1},
\dots, s_{T-1},a^1_{T-1},a^2_{T-1},s_T),
\quad s_t,\dots,s_T \in S, \; a^1_k,a^2_k \in A(s_k) \text{ for }t\leq k\leq T-1\enspace .
\end{equation}
The associated discounted reward is given by:
    \begin{align}\label{game val pol}
      r(\alpha)&\coloneqq
                 \sum_{k=t}^{T-1}(\prod_{l=t}^{k-1}\gamma_l)r_k(s_k,a_k^1,a_k^2) \;+\; (\prod_{l=t}^{T-1}\gamma_l)\phi (s_T)\enspace .
 \end{align}
Given a sequence of randomized policies for the \textit{Max} player, $\bm{\pi}=(\pi_1,\dots,\pi_{T-1})$ and  a sequence for the \textit{Min} player,
$\bm{\sigma}=(\sigma_1,\dots,\sigma_{T-1})$ we define a probability measure on
the set of trajectories, analogously to \eqref{prob MDP path},
as 
\begin{equation}
\label{game path prob}
P^{\bm{\pi},\bm{\sigma}}(\alpha|s_t)\coloneqq\prod_{i=t}^{T-1}\pi_i(a^1_i|s_i)\sigma_i(a^2_i|s_i)P_t(s_{i+1}|s_i,a^1_i,a^2_i) \enspace .
\end{equation}
We then
define $V^{t,\bm{\pi},\bm{\sigma}}_s$, the stage $t$ \textit{value function} evaluated at state $s$, under the sequence of policies
$\bm{\pi}$ and $\bm{\sigma}$, to be the expected value with respect to the probability measure \eqref{game path prob}, of the sum of discounted rewards along all paths from state $s$ till the end of the game, analogously to \eqref{MDP value given policy},
\begin{equation}
  \label{game value given policy}
V^{t,\bm{\pi},\bm{\sigma}}_s
\;=\;
\mathbb{E}^{\bm{\pi},\bm{\sigma}}_s r(\alpha)
=\sum_\alpha P_t^{\bm{\pi},\bm{\sigma}}(\alpha|s)r(\alpha) 
\enspace .
\end{equation}
We then  have the Kolmogorov recursion:
 \begin{equation}
V^{t,\bm{\pi},\bm{\sigma}}_s
= \sum_{a^1\in\actions^1(s),a^2 \in \actions^2(s)} \pi_t(a^1|s)\sigma_t(a^2|s)
  \Bigl[
    r(s,a)
    + \sum_{s'\in\states} \gamma_t(s) P_t(s'\mid s,a^1,a^2)\,V^{t+1,\bm{\pi},\bm{\sigma}}_{s'}
  \Bigr],
\end{equation}

Notice that up to this point what we have defined is entirely analogous to the MDP case.

The game starting from state $s$
at stage $t$ has a \emph{value} $V_s^t$, and there is a pair $(\bm{\pi}^*,\bm{\sigma}^*)$ of optimal sequences of policies, meaning that
\begin{align}
  \label{e-def-val}
  V_s^{t,\bm{\pi},\bm{\sigma}^*} \leq V^t_s\coloneqq  V_s^{t,\bm{\pi}^*,\bm{\sigma}^*} \leq V_s^{t,\bm{\pi}^*,\bm{\sigma}}\enspace, \qquad s\in S,\; 1\leq t\leq T\enspace,
\end{align}
for all pairs of sequences of policies $(\bm{\pi},\bm{\sigma})$.
In other words, $(\bm{\pi}^*,\bm{\sigma}^*)$ is a saddle point.
In particular,
\begin{equation}
\label{e-def-val2}
V^t_s
=\max_{\bm{\pi}}\;\min_{\bm{\sigma}}\;V^{t,\bm{\pi},\bm{\sigma}}_s
=\min_{\bm{\sigma}}\;\max_{\bm{\pi}}\;V^{t,\bm{\pi},\bm{\sigma}}_s \enspace .
\end{equation}
or more explicitly
  \begin{equation}
\label{Val game}V^t_s
\;=\;
\max_{\bm{\pi}}\;\min_{\bm{\sigma}}
\;\mathbb{E}^{\bm{\pi},\bm{\sigma}}\Biggl[\sum_{k=t}^{T-1} (\prod_{l=t}^{k-1}\gamma_l(s_l))r_k\bigl(s_k,a^1_k,a^2_k\bigr)
\;+\;(\prod_{l=t}^{T-1}\gamma_l(s_l))\phi(s_T)
\;\Bigm|\;s_t=s\Biggr], \qquad s\in S,\;1\leq t\leq T-1
\end{equation}
in which the max and the min commute.
Moreover, the value satisfies the Shapley-Bellman equation
\begin{equation}\label{e-Shapley}
V^t_s
=\max_{\bm{\pi}}\;\min_{\bm{\sigma}}
\sum_{a^1,a^2}\pi_t(a^1|s)\,\sigma_t(a^2|s)\Bigl[r_t(s,a^1,a^2)
\;+\;
\sum_{s'\in S}\gamma_t(s)P_t\bigl(s'\mid s,a^1,a^2\bigr)\,V^{t+1}_{s'}\Bigr],
\end{equation}
where the max and the min commute, together with
the boundary condition
\[
V^T_s=\phi(s), 
\qquad s \in S \enspace .
\]
Furthermore, an optimal policy of Player \emph{Max} at stage $t$ is obtained by selecting a policy $\pi$ which achieves the maximum in~\eqref{e-Shapley}, whereas an optimal policy for player \emph{Min} is obtained dually.
Results of this nature go back to Shapley~\cite{shapley_stochastic}, building on von Neumann's minimax theorem, we refer
the reader to~\cite[Th.~IV.3.2 p.~182]{sorin_repeated_games} or~\cite{solan}
for a proof.

If the second player is a dummy, meaning that in every state $s$, $A^2(s)$
is a singleton, the concurrent game reduces to a MDP.

Just like the Bellman equation for the value of an MDP is a discretized  Hamilton-Jacobi-Bellman equation so the Shapley-Bellman equation for the value of a game, is a discretized version of the Hamilton-Jacobi-Isaacs equation.

\if{\subsection{One player games, aka MDP}
Notice now that indeed, if the second player has always just a single action then this reduces to an MDP. 
We show this next.

Starting from the finite‐horizon Shapley recursion for two players,
\[
V^t_s
=\max_{a^1\in A^1(s)}
\;\min_{a^2\in A^2(s)}
\Bigl\{
r_t(s,a^1,a^2)
+\sum_{s'}\gamma_tP\bigl(s'\mid s,a^1,a^2\bigr)\,V^{t+1}_{s'}
\Bigr\},
\quad V^T_s=\phi(s),
\]
assume there is only one decision‐maker.  Then
\[
A^2(s)=\{\star\},
\]
so the inner minimization over \(a^2\) is trivial:
\[
\min_{a^2\in\{\star\}}
\bigl\{r_t(s,a^1,a^2)+\sum_{s'}\gamma_T P(s'\mid s,a^1,a^2)V^{t+1}_{s'}\bigr\}
=r_t(s,a^1,\star)+\sum_{s'}\gamma_t p(s'\mid s,a^1,\star)V^{t+1}_{s'}.
\]
Define
\[
r_t(s,a)\equiv r_t(s,a,\star),
\qquad
P(s'\mid s,a)\equiv P(s'\mid s,a,\star).
\]
Hence the recursion reduces to
\[
V^t_s
=\max_{a\in A(s)}
\Bigl\{
r_t(s,a)
+\sum_{s'}\gamma_t P(s'\mid s,a)\,V^{t+1}_{s'}
\Bigr\},
\quad V^T_s=\phi(s),
\]
which is the finite‐horizon Bellman equation for a Markov Decision Process \eqref{MDP Bellman}.

}\fi
\subsection{Turn-based games}
\emph{Turn-based games}
are special concurrent games in which at any given state,
only one player has a non-trivial choice of action.
Therefore, the state space can be partitioned as
\[
S = S^1 \;\cup\; S^2,
\]
where
\begin{itemize}
  \item[--] $S^1$ are the states where Player 1 plays,
  \item[--] $S^2$ are the states where Player 2 plays.
\end{itemize}

\paragraph{\textbf{Player \textit{Max's} turn.}}

Since Player 2 has no choice at $s\in S^1$, the inner minimization
in the Shapley-Bellman equation~\eqref{e-Shapley} is trivial, therefore
\begin{equation}
\label{Turnvalmax}
V^t_s
=
\max_{a^1\in A^1(s)}
\Bigl\{
r_t(s,a^1)
+\sum_{s'\in S^1 \cup S^2}\gamma_t(s) P_t\bigl(s'\mid s,a^1\bigr)\,V^{t+1}_{s'}
\Bigr\},
\quad s\in S^1.
\end{equation}

\paragraph{\textbf{Player \textit{Min's}  turn.}}
Similarly, Player 1 has no choice at $s\in S^2$ therefore
\begin{equation}
\label{Turnvalmin}
V^t_s
=
\min_{a^2\in A^2(s)}
\Bigl\{
r_t(s,a^2)
+\sum_{s'\in S^1 \cup S^2}\gamma_t(s) P_t\bigl(s'\mid s,a^2\bigr)\,V^{t+1}_{s'}
\Bigr\},
\quad s\in S^2.
\end{equation}

    Note that these two equations are coupled as an action can move from a state where Player 1 plays to a state where Player 2 plays. 
    In other words even though $s$ may be in $S^1$, $P(s'|s,a_1)$ can be non zero for $s'\in S^2$. Therefore  $V^t_s$ for $s\in S^1$  may depend on $V^{t+1}_{s'}$ for $s'$ in $S^2$. 

We can also compute the optimal policy realizing the value of the game. This is analogous to the MDP case we saw in \eqref{MDP opt policy}. Unlike in the general concurrent game, the optimal policies are deterministic.
Indeed, an optimal policy for the Max player is characterized by
\begin{equation}
\label{Game opt policy 1}
    \pi^*_t(s)\in \arg\max_a 
      \Bigl[r(s,a)
    + \sum_{s'\in\states} \gamma_t(s) P_t(s'\mid s,a)\,V^{t+1}_{s'}\Bigr]
\end{equation}
where $s\in S^1$. Similarly, an optimal policy for the Min player 
satisfies
\begin{equation}
\label{Game opt policy 2}
    \sigma^*_t(s)\in \arg\min_a 
      \Bigl[r(s,a)
    + \sum_{s'\in\states} \gamma_t(s) P_t(s'\mid s,a)\,V^{t+1}_{s'}\Bigr]
\end{equation}
where $s\in S^2$.
\subsection{Stopping games}
Finally we will need \textit{stopping turn based games}. This means that at any state there are only two possible actions: stop or continue. The stop action is realized by introducing a so-called cemetery state, denoted $\bot$, which is absorbing.  This means that once there, a player stays there forever, receiving no reward.  To choose the stopping action means to go to the cemetery state $\bot$. 

\section{ReLU neural net as a turn-based, stopping game}\label{sec-turn}
    We will now show that \textit{the output of a ReLU neural net is the same as the value of a two-player,  zero-sum, turn-based, stopping game, which we call the ReLU net game}. The game runs in the opposite direction with respect to the neural net. 

    \subsection{Reminder on ReLU neural nets}
Recall $\operatorname{ReLU}(x)\coloneqq\max(x,0)$.
Assume the network has $L$ layers and layer $l$ has $k_l$ neurons. 
We number the layers starting from the output of the neural network (layer $1$) to its input (layer $L$).
The weight matrix of layer $l$ is denoted by $W^l$ and there are bias vectors $b^l \in \mathbb{R}^{k_l}$ in each layer $l$. 
The input vector is $x\in \mathbb{R}^{k_L}$.

Therefore the total output function of the net is $f:\mathbb{R}^{k_L} \to \mathbb{R}^{k_1}$ where
\begin{equation}
\label{NNo-bis}
 f(x)=\max(W^1\max(W^{2}(\max \dots (\max(W^{L-1}(\max(W^Lx+b^L,0))+b^{L-1},0)\dots)) +b^{2},0)+b^1,0)
\end{equation}
and the $\max$  is applied coordinate-wise on a vector. 

The reason for numbering the layers from the output layer of the net, to the input is so that time will move forward along the game and the Shapley-Bellman backward recursion will move from the input of the net to its output. 

The map $f(x)$ is a (generally non-convex) piecewise-linear map.

The formula which we will get from the game theoretic perspective will naturally be a maxmin formula.
\subsection{Description of the ReLU net game}

Given a ReLU net as described above, we define now a two-player, zero-sum turn-based, stopping game that we call the \textit{ReLU net game}. The game is played in the \textbf{opposite} of the direction  the neural network is running. %

The players are called \textit{Max} and \textit{Min} since one is trying to maximize reward and the other is trying to minimize it. 
At the end of a game played, the total reward is paid to the \textit{Max} player by the \textit{Min} player. 

The input to the neural net is the terminal reward for the game.
The expected value of rewards under optimal play, (for given terminal reward) will give the output of the ReLU neural network,

The biases of the neural net will be used to define rewards. The weights will be normalized in order to define probabilities thereby introducing discount factors. 
Concretely:
 \begin{itemize}
\item[--] 
The game starts at the last layer  of the neural net (layer 1) and proceeds up towards the first layer of the neural net (layer $L$) where it ends.
So \textbf{every layer corresponds to one time stage of the game}. 
Therefore the Shapley-Bellman backward recursion starts at the beginning of the neural net (layer $L$) and proceeds towards the end of the neural net (layer 1). 
\item[--]
The \textbf{states} of the game are as follows: Every neuron in a layer  of the net gives rise  to two game states called positive and negative.
We denote neuron $i$ in layer $l$ (counting from the end of the neural net) by $(l,i)$. 
To this neuron correspond two game states, $(l,i+)$ at which $\textit{Max}$ plays  and 
$(l,i-)$ at which $\textit{Min}$ plays. 
Denote the states where \textit{Max} plays by $S^+$ and the states where \textit{Min} plays by $S^-$.

Moreover there is a so-called \textbf{cemetery state} denoted $\bot$, which is absorbing (meaning once there you stay there forever) and corresponds to stopping playing. The future value from that state is always zero since the instantaneous payoff in this state is zero.

\item[--] 
  The \textbf{reward} at state $(l,i +)$ is $r^l(l,i+)\coloneqq b^l_i$, and the  reward  in state $(l,i -)$ is
  $r^l(l,i-)\coloneqq-b^l_i$,  where $b^l_i$ is the bias at neuron $(l,i)$.
\item[--]
  The \textbf{terminal reward} in state $(L,i+)$ is $\phi_{L,i+}(x)\coloneqq x_i$, where $x_i$ is the value of the $i$th input to the neural network, and the terminal reward in state $(L,i-)$ is $\phi_{L,i-}(x)\coloneqq -x_i$. 
\item[--]
  Let $\gamma^l_i\coloneqq\sum_j|W^l_{i,j}|$ be the \textbf{discount factor}
  in states $s=(l,i\pm$), so that $\gamma^l(s)\coloneqq \gamma_i^l$. We allow the discount factors to take values greater than $1$. We assume that $\gamma^l_i$ never vanishes.
\item[--] 
The \textbf{action} choice of the player when it is their turn to play, is either to stop which means going to the cemetery state $\bot$, or to continue.
\item[--]
A \textbf{state transition} is to move from a state in one layer to a state in the next layer. 
We define the \textbf{transition probabilities}
as follows.
Recall that
$a^+\coloneqq\max(a,0)\geq 0$ and $a^-\coloneqq\max(-a,0)\geq 0$ so that $a=a^+-a^-$ and $|a|=a^++a^-$. 

Define 
\begin{equation}
\label{game prob}
P^l_{i+,j+}=P^l_{i-,j-}\coloneqq\frac{(W^l_{i,j})^+}{\gamma^l_i}
\text{ and }
P^l_{i+,j-}=P^l_{i-,j+}\coloneqq\frac{(W^l_{i,j})^-}{\gamma^l_i}
\end{equation}

Note that 
\begin{equation}
    \sum_{W^l_{i,j}\geq 0 } P^l_{i+,j+}+\sum_{W^l_{i,j}\leq 0}  P^l_{i+,j-}=
    \sum_{W^l_{i,j}\geq 0 }\frac{1}{\gamma^l_i}(W^l_{i,j})^+ +
   \sum_{W^l_{i,j}\leq 0 } \frac{1}{\gamma^l_i}(W^l_{i,j})^-=
    \frac{1}{\gamma^l_i}\sum_j |W^l_{i,j}|=1 \enspace,
\end{equation}
so that $P^l_{i+,-}$ defines a probability distribution and analogously for $P^l_{i-,-}$.

If a player chooses to continue the game and is in state $(l,i_+)$, then, if $W^l_{i,j}> 0$, they transition to state $(l+1,j_+)$ with probability $P^l_{i+,j+}$; if $W^l_{i,j}< 0$ they transition to state $(l+1,j_-)$ with probability $P^l_{i+,j-}$.\todo{SG: Put action befores transitions, the latter are meaningfull only if cont is chosen}
Similarly, if the current state is $(l,i_-)$, then if $W^l_{i,j}> 0$, they transition to state $(l+1,j_-)$ with probability $P^l_{i-,j-}$; if $W^l_{i,j}< 0$ they transition to state $(l+1,j_+)$ with probability $P^l_{i-,j+}$.

\end{itemize}
If Players $\textit{Max}$ and $\textit{Min}$ play according to the sequences of policies $\bm{\pi}\coloneqq(\pi_1,\dots. \pi_L)$ and $\bm{\sigma}\coloneqq(\sigma_1,\dots , \sigma_L)$, respectively,
the expected payoff received by Player Max in the game from time $l$ to time $L$, with
initial state $s$ of the form $(l,i\pm)$, is given, according to~\eqref{game val pol}, \eqref{game value given policy}, by:
\begin{equation}
  V_{s}^{l,\bm{\pi},\bm{\sigma}}(x) =
  E^{\bm{\pi},\bm{\sigma}}\Big(r^l(s_l)+\gamma_l(s_l) r^{l+1}_{s_{l+1}}+ \dots + (\prod_{k=l}^{L-2}\gamma_k(s_k)) r^{L-1}_{s_{l-1}} + (\prod_{k=l}^{L-1} \gamma_k(s_k)) \phi_{L,i_L}(x) |s_l=s\Big)
\label{e-def-vpisigma}
\end{equation}
\todo{SG: minor discrepancy, I used $E_s$ but here the conditional expectation is noted $E(\cdot|s_t=t)$ can be fixed later on, the conditional expectation notation is quite clear...}

We denote by $V^l_{s}$ 
\begin{align} V^l_{s}(x) =\max_{\bm{\pi}}\;\min_{\bm{\sigma}}
V_{s}^{l,\bm{\pi},\bm{\sigma}}(x)=\min_{\bm{\sigma}}\;\max_{\bm{\pi}}
V_{s}^{l,\bm{\pi},\bm{\sigma}}(x)\label{e-def-value}
\end{align}
the value of the associated zero-sum game, as per~\eqref{e-def-val},\eqref{e-def-val2}.

\begin{remark} 
\textbf{Notation:} To avoid cluttering the notation, if $s=(l,i+)$, instead of writing $V^{l,\bm{\pi},\bm{\sigma}}_{(l,i+)}(x)$ or $V^l_{(l,i+)}(x)$ we simply write $V^{l,\bm{\pi},\bm{\sigma}}_{i+}(x)$ or $V^l_{i+}(x)$,  respectively. The same goes when $s=(l,i-)$.
\end{remark}

For a description of the ReLU net -- ReLU game correspondence, see also \Cref{tab:NN-Games1}.

\begin{proposition}

    Given a ReLU neural net, consider the ReLU net game defined above,  then the values of the game,  $V_{i+}^l$ and  $V_{i-}^l$, satisfy the following Shapley-Bellman equations:
    \begin{equation}
        V^l_{i+}(x)=\max(0,\gamma^l_i [\sum_{W_{i,j} \geq 0} P^l_{i+,j+} V^{l+1}_{j+}(x) +
        \sum_{W_{i,j} \leq 0} P^l_{i+,j-} V^{l+1}_{j-}(x)]
        +b^l_i)\label{e-Val plus}
    \end{equation}

    \begin{equation}
        V^l_{i-}(x)=\min(0,\gamma^l_i [\sum_{W_{i,j} \geq 0} P^l_{i-,j-} V^{l+1}_{j-}(x) +
        \sum_{W_{i,j} \leq 0} P^l_{i-,j+} V^{l+1}_{j+}(x)]
        -b^l_i)\label{e-Val minus}
    \end{equation}
with boundary conditions 
$V^L_{i,+}(x)\coloneqq x_i$ and $V^L_{i,-}(x)\coloneqq-x_i$
    where the vector $x\coloneqq(x_1,\dots,x_{n_1})$ is the input to the network.
\end{proposition}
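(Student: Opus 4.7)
The plan is to derive both equations by applying the Shapley--Bellman recursion for turn-based games, namely~\eqref{Turnvalmax} for states in $S^+$ and~\eqref{Turnvalmin} for states in $S^-$, specialized to the action set $\{\stop,\cont\}$ of a stopping game.

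First I would fix a state $s=(l,i+)\in S^+$ and enumerate the two admissible actions for Player \emph{Max}. Under the action $\stop$, the player transits deterministically to the cemetery state $\bot$, which is absorbing with zero instantaneous payoff and zero terminal payoff; hence the bracket in~\eqref{Turnvalmax} reduces to $0$. Under the action $\cont$, the player receives the bias $b^l_i$ and transits to a state of layer $l+1$ according to the distribution specified in~\eqref{game prob}: by the definition of the transition probabilities, only indices $j$ with $W^l_{i,j}\geq 0$ contribute weight $P^l_{i+,j+}$ on $(l+1,j+)$, and only indices $j$ with $W^l_{i,j}\leq 0$ contribute weight $P^l_{i+,j-}$ on $(l+1,j-)$. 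Substituting into~\eqref{Turnvalmax} and applying the discount factor $\gamma^l_i$ yields
\[
V^l_{i+}(x)=\max\Bigl(0,\;b^l_i+\gamma^l_i\Bigl[\sum_{W^l_{i,j}\geq 0}P^l_{i+,j+}V^{l+1}_{j+}(x)+\sum_{W^l_{i,j}\leq 0}P^l_{i+,j-}V^{l+1}_{j-}(x)\Bigr]\Bigr),
\]
which is precisely~\eqref{e-Val plus}. The case $s=(l,i-)\in S^-$ is symmetric: applying~\eqref{Turnvalmin}, the $\stop$ action again gives value $0$, while the $\cont$ action yields instantaneous payoff $-b^l_i$ (since the reward for \emph{Min} is $r^l(l,i-)=-b^l_i$) together with the transitions $P^l_{i-,j-}$ on $(l+1,j-)$ for $W^l_{i,j}\geq 0$ and $P^l_{i-,j+}$ on $(l+1,j+)$ for $W^l_{i,j}\leq 0$; the $\min$ of these two quantities gives~\eqref{e-Val minus}.

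Finally, the boundary conditions $V^L_{i,\pm}(x)=\pm x_i$ follow directly from the terminal payoff assignment $\phi_{L,i\pm}(x)=\pm x_i$ given in the description of the ReLU net game, and from the fact that at stage $L$ no action is taken, so the value coincides with $\phi$.

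I do not anticipate a serious obstacle here, as the argument is essentially bookkeeping: the content of the proposition is that the Shapley--Bellman recursion for a turn-based stopping game, together with the specific choice of state space, rewards and transition kernel defined in~\eqref{game prob}, yields the two stated max/min recursions. The only point that requires a small care is that the reward should be interpreted as action-dependent (zero for $\stop$, $\pm b^l_i$ for $\cont$ in $(l,i\pm)$), which is consistent with the cemetery state carrying no payoff; this is exactly what makes the $0$ appear as one of the two arguments of the outer $\max$ (or $\min$).
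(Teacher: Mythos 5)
Your proposal is correct and follows essentially the same route as the paper's proof: specialize the turn-based Shapley--Bellman recursions~\eqref{Turnvalmax} and~\eqref{Turnvalmin} to the two actions of the stopping game, with the $\stop$ action contributing the value $0$ via the cemetery state and the $\cont$ action contributing the bias plus the discounted expectation over the transitions~\eqref{game prob}. Your added remark that the reward must be read as action-dependent (zero for $\stop$, $\pm b^l_i$ for $\cont$) is a useful clarification that the paper leaves implicit.
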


\begin{proof}

We apply equations \eqref{Turnvalmax} and \eqref{Turnvalmin} already derived for a turn-based game.
The stopping decision leads to the cemetery state and zero value. The decision to continue from $(l,i,+)$ has two kinds of possible destinations $(l+1,j,+)$ if $W_{i,j}>0$ and $(l+1,j,-)$ if $W_{i,j}<0$. Therefore the result follows. The proof for $V^l_{i-}$ is analogous. 
\end{proof}
\begin{remark}
    Note that substituting ~\eqref{e-Val minus} in ~\eqref{e-Val plus} we get an expression with both $\max$ and $\min$. 
    
\end{remark}
\begin{theorem}
\label{theorem 1}
  The value of the $i$th-output of a ReLU neural network of depth $L$, on input vector $x$, coincides with the value of the associated discounted turn-based stopping game in horizon $L$ with initial state $(1,i,+)$ and terminal payoff $\phi_{L,\cdot}(x)$.

  More precisely,  let $y^l_i$ be the output of the $i$th neuron in the $l$ layer of the neural network,
  so that the output of the neural net is $y^1$ and the input is $y^L$ where $L$ is the number of layers of the neural net; then
\begin{align}
\label{Val=NN out}
y^l_i = V^l_{i,+}(x)= -
V^l_{i,-}(x) \enspace .
\end{align}
\end{theorem}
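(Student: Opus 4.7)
The plan is to prove the equality $y^l_i = V^l_{i,+}(x) = -V^l_{i,-}(x)$ by \emph{backward induction on the layer index} $l$, running from $l=L$ down to $l=1$. The base case at $l=L$ is immediate from the definitions: by convention $y^L_i = x_i$ coincides with the $i$-th input, and the terminal conditions give $V^L_{i,+}(x)=x_i$ and $V^L_{i,-}(x)=-x_i$, so the identity~\eqref{Val=NN out} holds at $l=L$.

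For the inductive step, assume $y^{l+1}_j = V^{l+1}_{j,+}(x) = -V^{l+1}_{j,-}(x)$ for every $j$, and plug into the Shapley--Bellman equation~\eqref{e-Val plus}. The key algebraic manipulation is to observe that the transition probabilities~\eqref{game prob} are defined precisely so that $\gamma^l_i P^l_{i+,j+} = (W^l_{i,j})^+$ and $\gamma^l_i P^l_{i+,j-} = (W^l_{i,j})^-$. Combined with the inductive hypothesis, the bracketed expression in~\eqref{e-Val plus} collapses:
\begin{equation*}
\sum_{W^l_{i,j}\geq 0}(W^l_{i,j})^+ y^{l+1}_j + \sum_{W^l_{i,j}\leq 0}(W^l_{i,j})^-(-y^{l+1}_j)=\sum_j W^l_{i,j}\, y^{l+1}_j,
\end{equation*}
using that on the first sum $(W^l_{i,j})^+ = W^l_{i,j}$ and on the second $-(W^l_{i,j})^- = W^l_{i,j}$. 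Consequently $V^l_{i,+}(x)=\max\bigl(0,\,\sum_j W^l_{i,j}y^{l+1}_j + b^l_i\bigr)=\ReLU\bigl(\sum_j W^l_{i,j}y^{l+1}_j + b^l_i\bigr)=y^l_i$, which is exactly the forward pass of the network.

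An entirely parallel computation using~\eqref{e-Val minus}, the inductive hypothesis, and the symmetries $P^l_{i-,j-}=P^l_{i+,j+}$ and $P^l_{i-,j+}=P^l_{i+,j-}$, gives that the inner bracket equals $-\sum_j W^l_{i,j}y^{l+1}_j$. Hence
\begin{equation*}
V^l_{i,-}(x)=\min\Bigl(0,\,-\sum_j W^l_{i,j}y^{l+1}_j - b^l_i\Bigr)=-\max\Bigl(0,\sum_j W^l_{i,j}y^{l+1}_j + b^l_i\Bigr)=-y^l_i,
\end{equation*}
completing the induction and yielding the theorem; in particular $f(x)_i = y^1_i = V^1_{i,+}(x)$.

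I do not expect a serious obstacle: the content is essentially the observation that the encoding of weights into signed transition probabilities, together with the antisymmetry $V^{l+1}_{j,-}=-V^{l+1}_{j,+}$, is designed exactly so that the sign-splitting in~\eqref{e-Val plus} reassembles into the linear combination $\sum_j W^l_{i,j} y^{l+1}_j$. The only point that requires minor care is the handling of the index sets $\{j : W^l_{i,j}\geq 0\}$ and $\{j : W^l_{i,j}\leq 0\}$, which overlap on zero weights; this causes no issue since $(W^l_{i,j})^+ = (W^l_{i,j})^- = 0$ whenever $W^l_{i,j}=0$, so the overlap contributes nothing. The formula for the optimal policies stated in the introduction then follows from~\eqref{Game opt policy 1} and~\eqref{Game opt policy 2} applied at the argmax/argmin defining $V^l_{i,\pm}$, since the continue action is preferred exactly when the bracketed sum plus bias is nonnegative.
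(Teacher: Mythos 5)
Your proof is correct and follows essentially the same route as the paper: backward induction on the layer index with base case given by the terminal conditions, and an inductive step that substitutes the hypothesis into the Shapley--Bellman equations~\eqref{e-Val plus}--\eqref{e-Val minus} and uses $\gamma^l_i P^l_{i+,j\pm}=(W^l_{i,j})^{\pm}$ to reassemble the bracket into $\sum_j W^l_{i,j}y^{l+1}_j$. Your remark about the overlap of the index sets at zero weights is a small point of care the paper glosses over, but the argument is otherwise identical.
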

\begin{proof}

We will prove this by induction.
Indeed by definition  for the terminal rewards we have  $V^L_{i+}(x)=x_i=y^L_i$ and $V^L_{i-}(x)=-x_i=-y^L_i$. 

Moreover, assume
$V^{l+1}_{j+}(x)=y^{l+1}(x)$ and $V^{l+1}_{j-}(x)=-y^{l+1}(x)$ then
we see from the  Shapley-Bellmann equation \eqref{e-Val plus} that
\begin{align}
    V^l_{i+}(x)& =\max(0,\gamma^l_i [\sum_{W_{i,j} \geq 0} P^l_{i+,j+} V^{l+1}_{j+}(x) +
        \sum_{W_{i,j} \leq 0} P^l_{i+,j-} V^{l+1}_{j-}(x)]
        +b^l_i)
        \\
&=\max(0,\gamma^l_i [\sum_{W_{i,j} \geq 0} P^l_{i+,j+} y^{l+1}_{j-}(x) +
        \sum_{W_{i,j} \leq 0} P^l_{i+,j-} (-y^{l+1}_j(x))]
        +b^l_i)
  \\
  &=
       \max(0,\gamma^l_i \sum_j [(W^l_{i,j})^+- (W^l_{i,j})^-]y^{l+1}_{j-}(x)
        +b^l_i)=
        \max(0, \sum_j W^l_{i,j}y^{l+1}_j +b^l_i)=y^l_i\enspace .
        \end{align}  
The proof that $V^l_{i-}(x)=-y^l_i$ is dual.
\end{proof}
\begin{remark}
Note that if all the weights are positive then we have a one player game, namely a MDP.
\end{remark}
The following observation allows us to relate the Lipschitz constant of the ReLU net map
with the discount factors of the game.
\begin{proposition}[Lipschitz constant of the ReLU net map]
The map computed by a ReLU neural net is Lipschitz with respect to the sup norm. Moreover 
for all $1\leq l\leq L$, consider the maximal discount factor $\bar{\gamma}^l\coloneqq  \max_{1\leq i\leq k_l}\gamma_i^l$. Then, the product $\bar{\gamma}^1\dots \bar{\gamma}^L$
  provides an upper bound of the Lipchitz constant of the neural-network map $f$ with respect to the sup-norm.
\end{proposition}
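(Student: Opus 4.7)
The plan is to derive the Lipschitz bound directly from the game representation of \Cref{theorem 1}, exploiting that under a \emph{fixed} pair of policies $(\bm{\pi},\bm{\sigma})$ the value $V^{1,\bm{\pi},\bm{\sigma}}_{i+}(x)$ depends affinely on the terminal payoff $\pm x$, and that the saddle-point value $V^1_{i+}(x)=f_i(x)$ inherits the Lipschitz modulus of its fixed-policy pieces.

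First, I would perform the standard saddle-point bookkeeping that reduces the problem to the fixed-policy case. Fix $x,x'\in\mathbb{R}^{k_L}$ and let $\bm{\pi}^*$ be optimal at $x$ and $\bm{\sigma}^*$ be optimal at $x'$ in the sense of~\eqref{e-def-val}. Then $V^1_{i+}(x)\leq V^{1,\bm{\pi}^*,\bm{\sigma}^*}_{i+}(x)$ and $V^1_{i+}(x')\geq V^{1,\bm{\pi}^*,\bm{\sigma}^*}_{i+}(x')$, whence $V^1_{i+}(x)-V^1_{i+}(x')\leq V^{1,\bm{\pi}^*,\bm{\sigma}^*}_{i+}(x)-V^{1,\bm{\pi}^*,\bm{\sigma}^*}_{i+}(x')$; swapping the roles of $x$ and $x'$ yields $|f_i(x)-f_i(x')|\leq \sup_{\bm{\pi},\bm{\sigma}}|V^{1,\bm{\pi},\bm{\sigma}}_{i+}(x)-V^{1,\bm{\pi},\bm{\sigma}}_{i+}(x')|$, reducing everything to bounding each affine piece.

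Second, I would plug in the explicit path-integral formula~\eqref{e-def-vpisigma}. The instantaneous rewards $r^k(s_k)=\pm b^k_{j}$ are independent of $x$, so subtracting the $x'$-version cancels them and leaves only the terminal term,
\[
V^{1,\bm{\pi},\bm{\sigma}}_{i+}(x)-V^{1,\bm{\pi},\bm{\sigma}}_{i+}(x')=\mathbb{E}^{\bm{\pi},\bm{\sigma}}\!\!\left[\Big(\prod_{k=1}^{L-1}\gamma_k(s_k)\Big)\bigl(\phi_{L,s_L}(x)-\phi_{L,s_L}(x')\bigr)\,\Big|\,s_1=(1,i,+)\right].
\]
Since $\phi_{L,(L,j,\pm)}(x)=\pm x_j$, the terminal difference is uniformly bounded by $\|x-x'\|_\infty$ regardless of which terminal state $s_L$ is reached. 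Passing $|\cdot|$ inside the expectation and using the deterministic per-layer bound $\gamma_k(s_k)\leq\bar{\gamma}^k$ removes the randomness and yields $|V^{1,\bm{\pi},\bm{\sigma}}_{i+}(x)-V^{1,\bm{\pi},\bm{\sigma}}_{i+}(x')|\leq\bigl(\prod_{k=1}^{L-1}\bar{\gamma}^k\bigr)\|x-x'\|_\infty$, which is majorized by $\bigl(\prod_{k=1}^{L}\bar{\gamma}^k\bigr)\|x-x'\|_\infty$ as stated (a trivially weaker bound that absorbs the unused factor $\bar{\gamma}^L$).

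The hard part, if any, is purely organizational: matching the index range of the discount factors that truly appear in the Feynman--Kac trajectory to the product in the statement, and making the saddle-point reduction clean. Neither is deep—the reduction is just the observation that a pointwise $\max\min$ of affine functions sharing a common Lipschitz modulus is itself Lipschitz with the same modulus. Once the path-integral representation of $V^{1,\bm{\pi},\bm{\sigma}}_{i+}$ is in hand, the proof is a matter of two inequalities and a uniform bound on the discount product.
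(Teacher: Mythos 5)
Your proof is correct, but it takes a genuinely different route from the paper's. The paper argues layer by layer: each Shapley--Bellman update \eqref{e-Val plus}--\eqref{e-Val minus} is a $\max$/$\min$ of affine maps whose sup-norm Lipschitz constant is the $L_1$-norm of the gradient (H\"older), hence at most $\bar{\gamma}^l$; since a common Lipschitz modulus is preserved under $\inf$ and $\sup$, each layer map is $\bar{\gamma}^l$-Lipschitz, and one simply composes. You instead work globally: a saddle-point comparison reduces the increment of the value to that of a fixed-policy value, and the Feynman--Kac representation \eqref{e-def-vpisigma} exhibits that fixed-policy value as affine in $x$ with the terminal difference weighted by the realized discount product. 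Both arguments are sound. Your route is heavier (it needs the existence of optimal policies and the path-integral formula) but buys a potentially sharper constant, namely $\sup_{\bm{\pi},\bm{\sigma}} \mathbb{E}^{\bm{\pi},\bm{\sigma}}\bigl[\prod_k \gamma_k(s_k)\bigr]$, which averages over the states actually visited rather than taking the worst discount in every layer; the paper's route is elementary and purely local. One small caveat: your final step, absorbing the ``unused factor $\bar{\gamma}^L$'' to pass from $\prod_{k=1}^{L-1}\bar{\gamma}^k$ to $\prod_{k=1}^{L}\bar{\gamma}^k$, is only a weakening when $\bar{\gamma}^L\geq 1$, which is not guaranteed. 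In fairness, this off-by-one is inherited from the paper itself: the network formula \eqref{NNo} applies $L$ affine maps while the game's boundary condition $V^L_{i\pm}(x)=\pm x_i$ leaves only $L-1$ Shapley updates, so the paper's own composition also produces $\bar{\gamma}^1\cdots\bar{\gamma}^{L-1}$. It would be cleaner to state the bound you actually prove and flag the indexing discrepancy rather than silently majorize.
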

\begin{proof}
  We observe that each of the affine maps arising at the right-hand-side of~\eqref{e-Val plus}-\eqref{e-Val minus} is Lipschitz of constant at most $\bar{\gamma}^l$, since, by H\"older inequality,
  the Lipschitz constant in the sup-norm of an affine map is the $L_1$-norm of its gradient.
  Moreover, the set of functions
  that are Lipschitz for a common constant is stable by infimum and supremum.
  Then, it follows from~\eqref{e-Val plus}-\eqref{e-Val minus} that every
  value $V_{i\pm}^l(x)$ is obtained by applying a Lipschitz function of constant $\bar{\gamma}^l$ to the vector of values $(V_{k\epsilon}^{l+1}(x))_{1\leq k\leq k_{l+1}, \epsilon \in \pm }$.
The conclusion follows by composing these Lipschitz functions.
  \end{proof}

    \begin{remark}
\Cref{theorem 1} shows that the game encodes in a self-dual way both the ReLU map $x\mapsto y$ and its opposite. Indeed,  $-y_i^l$ is gotten by considering the value function at the the states $(l,i-)$, instead of $(l,i+)$. 
  \end{remark}
\subsection{Example of a ReLU neural net as a game}
Consider a three-layer
ReLU neural net with 2 neurons in the input layer, 2 neurons in the intermediate layer and 1 neuron in the last (output) layer. 
We assume the input is  $x=(x_1,x_2)$.
Let 
$y^2_1$ and $y^2_2$ be the output of the middle layer and $y^1$ the output of the net.

Let's assume that the weight matrix from the input layer to the intermediate layer is
\[
W^2\coloneqq\begin{pmatrix}
7 & -8 \\
-1 & -2
\end{pmatrix}
\]
Moreover the biases are
\[
b^2=\begin{pmatrix}
42\\
33
\end{pmatrix}
\]
The weight matrix from the intermediate layer to the output layer, and the bias, are given by
\[
W^1\coloneqq(2,-5), \qquad
b^1\coloneqq7.
\]

We see that the output of the middle layer is given by
\begin{align}\label{e-middle}
\begin{pmatrix}
y^2_1\\
y^2_2
\end{pmatrix}=
\max(
W^2 \begin{pmatrix}
x_1\\
x_2
\end{pmatrix}+
\begin{pmatrix}
    b^2_1\\
    b^2_2
\end{pmatrix},0)
=\begin{pmatrix}
\max(7x_1-8x_2+42,0)&
    \\
\max(-x_1-2x_2+33,0),
\end{pmatrix}
\end{align}
and the output of the net is given by
\begin{equation}
y^1=\max(W^1
\begin{pmatrix}
    y^2_1\\
    y^2_2
\end{pmatrix}+
    b^1,0)
\end{equation}
 and therefore
 \begin{equation}
 \label{e-first}
   y^1=
    \max(7+2y^2_1-5y^2_2,0)
    \end{equation}

Let us now construct the corresponding ReLU net game.
We have the discount factors: 
$\gamma^2_1=7+8=15$,
 $\gamma^2_2=1+2=3$
and $\gamma^1=2+5=7$ and 
thus
\[
(W^2)^+\coloneqq\begin{pmatrix}
7 & 0 \\
0 & 0
\end{pmatrix}
\]
\[
(W^2)^-\coloneqq\begin{pmatrix}
0 & 8 \\
1 & 2
\end{pmatrix}.
\]
Therefore the transition probabilities where the same player keeps playing are 
\[
\begin{pmatrix}
P^2_{1+,1+}=P_{1-,1-} & P^2_{1+,2+}=P^2_{1-,2-} \\
P^2_{2+,1+}=P^2_{2-,1-} & P^2_{2+,2+}=P^2_{2-,2-}
\end{pmatrix}=
\begin{pmatrix}
\frac{7}{15} & 0 \\
0 & 0
\end{pmatrix}
\]
and the transition probabilities where the player changes are

\[
\begin{pmatrix}
P^2_{1+,1-}=P^2_{1-,1+} & P^2_{1+,2-}=P^2_{1-,2+} \\
P^2_{2+,1-}=P^2_{2-,1+}  & P^2_{2+,2-}=P^2_{2-,2+} 
\end{pmatrix}=
\begin{pmatrix}
0 & \frac{8}{15} \\
\frac{1}{3} & \frac{2}{3}
\end{pmatrix}
\]

Moreover $(W^1)^+=(2,0)$ and
$(W^1)^-=(0,5)$.

This means that 
\begin{align}
P^1_{1+,1+}=P^1_{1-,1-}=\frac{2}{7} &  , \qquad
P^1_{1+,2-}=P^1_{1-,2+}=\frac{5}{7}. 
\end{align}
while
\begin{align}
P^1_{1+,1-}=P^1_{1-,1+}=0 
\end{align}

Now we write the Shapley equations~\eqref{e-Val plus},\eqref{e-Val minus}. Recall that the boundary conditions are $V^3_{1+}(x)=x_1$, $V^3_{2,+}(x)=x_2$, $V^3_{1-}(x)=-x_1$ and $V^3_{2,-}(x)=-x_2$.
We then have 
\begin{align*}
V^2_{1+}(x)&= \max(0,\gamma^2_1(P^2_{1+,1+}V^3_{1+}(x)+P^2_{1+,2-}V^3_{2-}(x))+b^2_1) \iff
y^2_1 =\max(0,15(\frac{7}{15}x_1+\frac{8}{15}(-x_2))+42)\\
V^2_{2+}(x)(x)&=\max(0,\gamma^2_2(P^2_{2+,1-}V^3_{1-}(x)+P^3_{2+,2-}V^3_{2-}(x))+b^1_2) \iff
y^2_2=\max(0,3(\frac{1}{3}(-x_1)+\frac{2}{3}(-x_2))+33)
\end{align*}
which is the same as equation~\eqref{e-middle} of the neural net.
Similarly
\begin{equation}
V^1_{1+}(x)=\max(0,\gamma^1(P^1_{1+,1+}V^2_{1+}(x)+P^1_{1+,2-}V^2_{2-}(x))+b^1) \iff
y^1=\max(0,7(\frac{2}{7}y^2_1+\frac{5}{7}(-y^2_2))+7)
\end{equation}
is the same as  \eqref{e-first}.

For the states belonging to the $\min$ player, we check for example that 
\begin{equation}
V^2_{1-}(x)=\min(0,\gamma^2_1(P^2_{1-,1-}V^3_{1-}(x)+P^2_{1-,2+}V^3_{2+}(x))-b^2_1) \iff
-y^2_1=\min(0,15(\frac{7}{15}(-x_1)+\frac{8}{15}x_2)-42)
\end{equation}
Moreover
\begin{equation}
V^2_{2-}(x)=\min(0,\gamma^2_2(P^2_{2-,1+}V^3_{1+}(x)+P^2_{2-,2+}V^3_{2+}(x))-b^2_2) \iff
-y^2_2=\min(0,3(\frac{1}{3}x_1+\frac{2}{3}x_2)-33)
\end{equation}

Figure~\ref{fig:graph-of-the-game} shows the game corresponding to this neural net. 

\begin{figure}[ht]
\centering
\begin{tikzpicture}[
  scale=1.6,
  transform shape,
  >=Stealth,
  node distance=22mm and 28mm,
  vertex/.style={draw,circle,minimum size=1mm,inner sep=0pt,font=\scriptsize},
  edgelabel/.style={font=\scriptsize,sloped,allow upside down},
  labelbg/.style={fill=white,inner sep=1pt},
  edgew/.style={font=\scriptsize,sloped,allow upside down,fill=white,inner sep=1pt,text opacity=1},
  diamondnode/.style={draw,diamond,aspect=2,inner sep=0.6pt,minimum size=3.5mm},
  squarenode/.style={draw,rectangle,inner sep=0.6pt,minimum size=3.2mm},
]

\node[vertex] (v11p) at (0,  1.2) {\tiny$\,1,1+$};
\node[vertex] (v11m) at (0, -1.2) {\tiny$\,1,1-$};

\node[vertex] (v21p) at (4,  1.5) {\tiny$\,2,1+$};
\node[vertex] (v22p) at (4,  0.5) {\tiny$\,2,2+$};
\node[vertex] (v21m) at (4, -0.5) {\tiny$\,2,1-$};
\node[vertex] (v22m) at (4, -1.5) {\tiny$\,2,2-$};

\node[vertex] (v31p) at (8,  1.5) {\tiny$\,3,1+$};
\node[vertex] (v32p) at (8,  0.5) {\tiny$\,3,2+$};
\node[vertex] (v31m) at (8, -0.5) {\tiny$\,3,1-$};
\node[vertex] (v32m) at (8, -1.5) {\tiny$\,3,2-$};

\node[diamondnode] (d11p) at (0.8,  1.2) {};
\node[squarenode]  (s11p) at (1.6,  1.2) {};
\draw[->] (v11p) -- (d11p);
\draw[->] (d11p) -- (s11p) node[midway,edgelabel,above] {\tiny $7$};
\draw[->] (d11p.south) -- ++(0,-0.6) node[below,font=\scriptsize] {\tiny $0$};

\node[diamondnode] (d11m) at (0.8, -1.2) {};
\node[squarenode]  (s11m) at (1.6, -1.2) {};
\draw[->] (v11m) -- (d11m);
\draw[->] (d11m) -- (s11m) node[midway,edgelabel,below] {\tiny $-7$};
\draw[->] (d11m.south) -- ++(0,-0.6) node[below,font=\scriptsize] {\tiny$0$};

\node[diamondnode] (d21p) at (4.8,  1.5) {};
\node[squarenode]  (s21p) at (5.6,  1.5) {};
\draw[->] (v21p) -- (d21p);
\draw[->] (d21p) -- (s21p) node[midway,edgelabel,above] {\tiny$42$};
\draw[->] (d21p.south) -- ++(0,-0.3) node[below,font=\scriptsize] {\tiny$0$};

\node[diamondnode] (d22p) at (4.8,  0.5) {};
\node[squarenode]  (s22p) at (5.6,  0.5) {};
\draw[->] (v22p) -- (d22p);
\draw[->] (d22p) -- (s22p) node[midway,edgelabel,above] {\tiny$33$};
\draw[->] (d22p.south) -- ++(0,-0.3) node[below,font=\scriptsize] {\tiny$0$};

\node[diamondnode] (d21m) at (4.8, -0.5) {};
\node[squarenode]  (s21m) at (5.6, -0.5) {};
\draw[->] (v21m) -- (d21m);
\draw[->] (d21m) -- (s21m) node[midway,edgelabel,below] {\tiny$-42$};
\draw[->] (d21m.south) -- ++(0,-0.3) node[below,font=\scriptsize] {\tiny$0$};

\node[diamondnode] (d22m) at (4.8, -1.5) {};
\node[squarenode]  (s22m) at (5.6, -1.5) {};
\draw[->] (v22m) -- (d22m);
\draw[->] (d22m) -- (s22m) node[midway,edgelabel,below] {\tiny$-33$};
\draw[->] (d22m.south) -- ++(0,-0.3) node[below,font=\scriptsize] {\tiny$0$};

\draw[->] (s11p) -- (v21p) node[pos=0.14,edgelabel,above,yshift=2pt,xshift=2pt,labelbg] {$\tfrac{2}{7}$};
\draw[->] (s11p) -- (v22m) node[pos=0.14,edgelabel,below,yshift=-2pt,xshift=2pt,labelbg] {$\tfrac{5}{7}$};

\draw[->] (s11m) -- (v22p) node[pos=0.14,edgelabel,above,yshift=2pt,xshift=2pt,labelbg] {$\tfrac{5}{7}$};
\draw[->] (s11m) -- (v21m) node[pos=0.14,edgelabel,below,yshift=-2pt,xshift=2pt,labelbg] {$\tfrac{2}{7}$};

\draw[->] (s21p) to[bend left=8]
    node[pos=0.18,edgew] {$\tfrac{7}{15}$} (v31p);
\draw[->] (s21p) to[bend right=8]
    node[pos=0.10,edgew] {$\tfrac{8}{15}$} (v32m);   %

\draw[->] (s22p) to[bend left=12]
    node[pos=0.12,edgew] {$\tfrac{1}{3}$} (v31m);
\draw[->] (s22p) to[bend right=12]
    node[pos=0.12,edgew] {$\tfrac{2}{3}$} (v32m);

\draw[->] (s21m) to[bend left=12]
    node[pos=0.12,edgew] {$\tfrac{8}{15}$} (v32p);
\draw[->] (s21m) to[bend right=12]
    node[pos=0.12,edgew] {$\tfrac{7}{15}$} (v31m);    %

\draw[->] (s22m) to[bend left=8]
    node[pos=0.10,edgew] {$\tfrac{1}{3}$} (v31p);
\draw[->] (s22m) to[bend right=8]
    node[pos=0.18,edgew] {$\tfrac{2}{3}$} (v32p);

\draw[->] (v31p.east) -- ++(0.6,0) node[right,font=\small] {\tiny$x_{1}$};
\draw[->] (v32p.east) -- ++(0.6,0) node[right,font=\small] {\tiny$x_{2}$};
\draw[->] (v31m.east) -- ++(0.6,0) node[right,font=\small] {\tiny$-x_{1}$};
\draw[->] (v32m.east) -- ++(0.6,0) node[right,font=\small] {\tiny$-x_{2}$};

\end{tikzpicture}
\caption{Graph of the game corresponding to the ReLU neural net in the example in Section 4.3. The circles denote the states. A diamond after a state denotes the 2 possible actions at the state: stop (and get 0 reward) or continue and get the reward denoted on the edge exiting the diamond. A square is the transition to the next state. The edges exiting a square denote the non-trivial choices and the transition probabilities are indicated along these edges. The arrows point at the direction the game is played which is the opposite of the one the neural net is running. Therefore the inputs $(x_1,x_2)$ to the net are the terminal rewards of the  game. The evaluation of the neural net coincides with the Shapley-Bellman backward recursion for the value of the game. }
\label{fig:graph-of-the-game}
\end{figure}
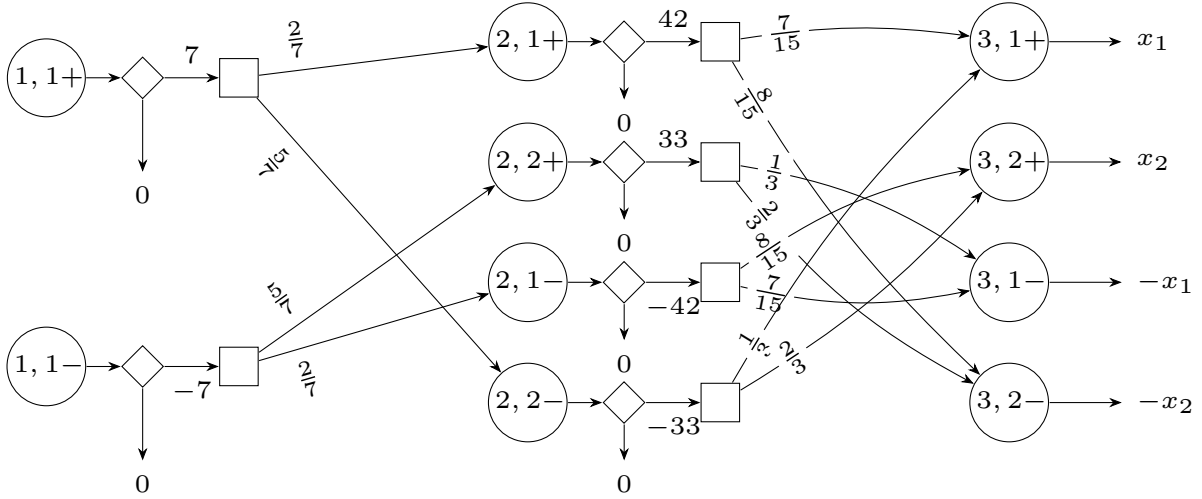

\section{Some applications of the ReLU net/game correspondence}\label{sec-appli}
The fact that a ReLU net can be interpreted as a game has certain implications for our understanding of what it computes and what properties the ReLU net map has. We explain these next.   

\subsection{Discrete path integral interpretation of the map computed by the ReLU net.}

Knowing that the output of the ReLU net for given input is the value of the ReLU net game for that given terminal reward allows us to interpret the output as a certain discrete path integral. The idea is that given the optimal policy, the value of the game is simply the expectation value of accumulated rewards along game trajectories.
To explain this we need some notation:

\begin{definition}
\label{Def tech}
Let
$\alpha:\{l,l+1,\dots,l+k\}\to S^+\cup S^-\cup\{ \bot\}$ be a game trajectory starting from a state at stage $l$ of the game.
Let $\nu\coloneqq 0\dots k-1$.
\begin{itemize}
    \item[--]
   We  define  $\sgn(\alpha(l+\nu))\coloneqq+1$ if $\alpha(l+\nu)$  is a Max state 
and $\sgn(\alpha(l+\nu))\coloneqq-1$ if $\alpha(l+\nu)$ is a Min state.
\item[--]
Denote by $0$ the stopping action and by $1$ the continue action.
\item[--]
A policy for Max is a 
map 
$\bm{\pi}:S^+ \to \{0,1\}$ and 
a policy for Min is a map 
$\bm{\sigma}:S^- \to \{0,1\}$.
We also have $\bm{\pi}(\bot)=\bm{\sigma}(\bot)\coloneqq 0$.

So a pair of policies is simply an assignment of $0$ or $1$ to the game states (where $\bot$ is always assigned $0$).

\end{itemize}
\end{definition}
\begin{definition}
    \label{Path space}
Let $\Path^{\bm{\pi},\bm{\sigma}}_{(l,i+)}$ denote the set of game trajectories (paths) 
which start at $(l,i+)$ and a length compatible with the policies $\bm{\pi}$ and $\bm{\sigma}$. To be precise let 
$\Path^{\bm{\pi},\bm{\sigma}}_{(l,i+)}$ be the set of 
$\alpha:\{l,l+1,\dots,l+k\}\to S^+\cup S^-\cup \{\bot\}$ satisfing the following three conditions:
\renewcommand{\theenumi}{\roman{enumi}}
\begin{enumerate}
    \item
$\alpha(l)=(l,i+)$ 
\item
For $\nu\coloneqq 1\dots k-1$,  %
if $\alpha(l+\nu)$ is a Max state we have 
$\bm{\pi}(\alpha(l+\nu))=1$ and
if  $\alpha(l+\nu)$ is a Min state we have 
$\bm{\sigma}(\alpha(l+\nu))=1$. 
\item
if $\alpha(l+k)$ is a Max state we have 
$\bm{\pi}(\alpha(l+k))=0$ and
if  $\alpha(l+k)$ is a Min state we have 
$\bm{\sigma}(\alpha(l+\nu))=0$.
Moreover we call that  $k$ the length of $\alpha$ and we define 
$\len(\alpha)\coloneqq k$. 
Since the horizon is $L$ we have $l+k \leq L$.
\end{enumerate}
The conditions (i)--(iii) imply that $l+k$ is the 
stage at which the stopping action is exercised.
To simplify the notation in what follows, if $\alpha(l)=(l,i\pm)$, we set $\gamma^l\coloneqq \gamma^l_i$.\todo{SG: to be fixed later on, homogeneized}
 
 \end{definition}
  \begin{proposition}
    Let $x\coloneqq(x_1,\dots x_{n_L})$ be the input to the ReLU net.
\todo{SG: deleted    ``Consider $x$ as the final reward for the corresponding ReLU net game.'', already said and we have the $-x$ issue}
    Then
      the value $V^{l,\bm{\pi},\bm{\sigma}}_{i,+}(x)$ under the policy sequences $\bm{\pi}$ and $\bm{\sigma}$ is given by
      \begin{equation}
         \label{V given pol}
V^{l,\bm{\pi},\bm{\sigma}}_{i,+}(x)=\sum_{\alpha \in \Path^{\bm{\pi},\bm{\sigma}}_{(l,i+)}} r(\alpha)
\prod_{\nu=0}^{\len(\alpha)-1} 
P^{l+\nu}_{\alpha(l+\nu),\alpha(l+\nu+1)}  
\end{equation}
where for $\alpha \in \Path^{\bm{\pi},\bm{\sigma}}_{(l,i+)}$
\begin{equation}
r(\alpha)\coloneqq
\sum_{\nu=0}^{\len(\alpha)-1}
\sgn((\alpha(l+\nu))\left(\prod_{\mu=l}^{l+\nu-1}  \gamma^\mu\right)
b^{l+\nu}_{\alpha(l+\nu)} 
+\delta_{\len(\alpha),L}
\left(\prod_{\mu=l}^{L-1}  \gamma^\mu\right)
\Phi_{\alpha(L)}(x)
\end{equation}
and $\delta_{\len(\alpha),L}=1$ if $\len(\alpha)=L$ and $0$ otherwise.
Consequently the output $y^l_i$ of neuron $(l,i)$ is given by 
\begin{equation}\label{e-path2}
y^l_i=V^l_{i+}(x)=\max_{\bm{\pi}}\min_{\bm{\sigma}}V^{l,\bm{\pi},\bm{\sigma}}_{i,+}
\end{equation}
where the set of paths $\Path^{\bm{\pi},\bm{\sigma}}(l,i+)$ is as
per \Cref{Path space}.
  \end{proposition}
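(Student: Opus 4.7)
The plan is to establish the path-integral formula~\eqref{V given pol} for fixed deterministic policies $(\bm{\pi},\bm{\sigma})$ by backward induction on the stage $l$, and then to deduce~\eqref{e-path2} from \Cref{theorem 1}.

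The base case $l=L$ is immediate: $\Path^{\bm{\pi},\bm{\sigma}}_{(L,i+)}$ contains only the length-zero path $\alpha(L)=(L,i+)$, the empty product of transition probabilities equals $1$, and $r(\alpha)$ reduces to $\Phi_{L,i+}(x)=x_i$, matching the boundary value $V^L_{i+}(x)=x_i$. For the inductive step, I assume the formula holds at stage $l+1$ for every state and split on $\bm{\pi}((l,i+))\in\{0,1\}$. If the policy stops, the Kolmogorov recursion yields $V^{l,\bm{\pi},\bm{\sigma}}_{i+}(x)=0$; correspondingly $\Path^{\bm{\pi},\bm{\sigma}}_{(l,i+)}$ is the singleton containing the trivial path, which carries an empty reward sum and no terminal contribution when $l<L$, so the right-hand side also vanishes. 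If the policy continues, I decompose each $\alpha\in\Path^{\bm{\pi},\bm{\sigma}}_{(l,i+)}$ as its first transition $(l,i+)\to(l+1,s')$ followed by a tail $\alpha'\in\Path^{\bm{\pi},\bm{\sigma}}_{(l+1,s')}$, and direct inspection of the definitions produces the factorizations
\begin{equation*}
r(\alpha)=b^l_i+\gamma^l_i\,r(\alpha'),\qquad \prod_{\nu=0}^{\len(\alpha)-1}P^{l+\nu}_{\alpha(l+\nu),\alpha(l+\nu+1)}=P^l_{i+,s'}\prod_{\nu=0}^{\len(\alpha')-1}P^{l+1+\nu}_{\alpha'(l+1+\nu),\alpha'(l+1+\nu+1)}.
\end{equation*}

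Summing first over $\alpha'$ with $s'$ fixed, the inductive hypothesis turns the $r(\alpha')$-weighted sum into $V^{l+1,\bm{\pi},\bm{\sigma}}_{s'}(x)$, while the unweighted sum of path probabilities equals $1$ since the finite horizon together with the determinism of $(\bm{\pi},\bm{\sigma})$ ensures that every trajectory in $\Path^{\bm{\pi},\bm{\sigma}}_{(l+1,s')}$ terminates almost surely. Summing then over $s'$ reproduces exactly the Kolmogorov recursion $V^{l,\bm{\pi},\bm{\sigma}}_{i+}(x)=b^l_i+\gamma^l_i\sum_{s'}P^l_{i+,s'}V^{l+1,\bm{\pi},\bm{\sigma}}_{s'}(x)$ under the continue action. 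The case of a Min state $(l,i-)$ is treated dually: the factor $\sgn(\alpha(l))=-1$ in the definition of $r(\alpha)$ accounts for the opposite sign of the instantaneous reward at Min states.

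Finally,~\eqref{e-path2} is immediate: by \Cref{theorem 1} we have $y^l_i=V^l_{i+}(x)$, and $V^l_{i+}(x)=\max_{\bm{\pi}}\min_{\bm{\sigma}}V^{l,\bm{\pi},\bm{\sigma}}_{i,+}(x)$ by definition~\eqref{e-def-value} of the value of the turn-based game; the restriction to deterministic policies (as in~\Cref{Def tech}) is harmless since turn-based games always admit deterministic optimal policies, as noted after~\eqref{Game opt policy 2}. The main subtlety throughout is the bookkeeping of the discount products $\prod_{\mu=l}^{l+\nu-1}\gamma^\mu$ and of the signs $\sgn(\alpha(l+\nu))$ under the path decomposition; once these are verified the argument is a standard Feynman-Kac-type unfolding of the Kolmogorov recursion.
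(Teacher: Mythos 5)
Your proof is correct, but it takes a different route from the paper's. The paper treats the identity~\eqref{V given pol} as an essentially definitional unpacking: it describes the path space $\Path^{\bm{\pi},\bm{\sigma}}_{(l,i+)}$, observes that in a stopping game a trajectory is determined by its state sequence, writes down $P(\alpha)$ and $r(\alpha)$, and then identifies the sum $\sum_\alpha P(\alpha)r(\alpha)$ with the expectation in~\eqref{e-def-vpisigma}/\eqref{game value given policy} without further argument (the analogous unfolding is only carried out in detail in Appendix~A, and only for MDPs). You instead \emph{verify} the formula by backward induction on the stage, decomposing each path into its first transition and its tail and checking that the path sum satisfies the Kolmogorov recursion. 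This buys genuine rigor at points the paper glosses over: you correctly note that the stop branch contributes $0$ (so the bias $b^l_i$ is collected only when ``continue'' is chosen, consistent with~\eqref{e-Val plus}), you identify that the first-stage reward term requires $\sum_{\alpha'}P(\alpha')=1$ over the tail paths, you track the discount products $\prod_\mu \gamma^\mu$ and the signs $\sgn(\alpha(l+\nu))$ through the decomposition, and you explicitly bridge the gap between the maximin over randomized policies in~\eqref{e-def-value} and the maximin over deterministic policies implicit in~\Cref{Def tech}, which the paper leaves tacit. The only soft spot is the justification that the tail path probabilities sum to $1$ (``terminates almost surely''); the cleaner argument is a finite induction on the remaining horizon using $\sum_{s'}P^l_{i\pm,s'}=1$ at each continue step, but the claim itself is correct. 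In short: same statement, more self-contained and more careful derivation; the paper's version is shorter because it leans on the general expectation formula of Section~3 as a black box.
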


  \begin{proof}

A policy for \textit{Max} is a 
map 
$\bm{\pi}:S^+ \to \{0,1\}$ and 
a policy for \textit{Min} is a map 
$\bm{\sigma}:S^- \to \{0,1\}$.

So a pair of policies is any assignment of $0$ or $1$ to the game states (the cemetery state $\bot$ is trivially assigned $0$).
Given a neuron $(l,i)$ we want to compute the value $V^l_{i,+}$. 

We need to consider all game trajectories starting 
at state $(l,i,+)$ and continuing through states labeled $1$, until they reach a state labeled $0$. 
This corresponds to a unique trajectory of neurons of the net. 
We now need to assign a probability to such a trajectory as well as a total reward acummulated along the trajectory.

Consider a pair of policies fixed
and consider a game trajectory $\alpha \in \Path^{\bm{\pi},\bm{\sigma}}_{(l,i+)}$. We denote by $\alpha(t)$ the game state of the trajectory $\alpha$ at time $t$. Therefore we have 
$\alpha(l)=(l,i+)$ and
\begin{equation}
\alpha\coloneqq\left[\alpha(l), \alpha(l+1),
\alpha(l+2)
\dots
\alpha(l+k)\right]
\end{equation}

The probability of $\alpha$ is 
\begin{equation}
P(\alpha)\coloneqq P^l_{\alpha(l),\alpha(l+1)}
P^{l+1}_{\alpha(l+1),\alpha(l+2)} \dots
P^{l+k-1}_{\alpha(l+k-1),\alpha(l+k)}\enspace .
\end{equation}
The reward along $\alpha$ is the sum of discounted  biases over max states minus the sum of discounted biases over min states for $\nu\coloneqq 0\dots k$, plus or minus the terminal reward:

\begin{equation}
r(\alpha)\coloneqq
\sum_{\nu=0}^{\len(\alpha)-1}
\sgn((\alpha(l+\nu))\left(\prod_{\mu=l}^{l+\nu-1}  \gamma^\mu\right)
b^{l+\nu}_{\alpha(l+\nu)} 
+\delta_{\len(\alpha),L}
\left(\prod_{\mu=l}^{L-1}  \gamma^\mu\right)
\Phi_{\alpha(L)}(x)
\label{eq-ralpha}
\end{equation}
Then we have
\begin{equation}
V^{l,\bm{\pi},\bm{\sigma}}_{i,+}(x)\coloneqq
\sum_{\alpha \in \Path^{\bm{\pi},\bm{\sigma}}_{(l,i+)}} P(\alpha)r(\alpha)
\end{equation}
Finally we have
\begin{equation}
  y^l_i=V^l_{i+}(x)=\max_{\bm{\pi}}\min_{\bm{\sigma}}V^{l,\bm{\pi},\bm{\sigma}}_{i,+}(x)\label{e-path1}
\end{equation}

\end{proof}
Given an input $x=(x_1,\dots, x_{n_L})$ to a ReLU net, $x$ becomes the terminal reward of the corresponding ReLU net game. The value of the game is realized by optimal policy sequences
for the two players.
To make the dependence on $x$ explicit we denote these optimal policy sequences by 
$\bm{\pi}^*(x)$ and $\bm{\sigma}^*(x)$.
We then have:
\begin{proposition}
\label{optimal pol}
Let  $x=(x_1,\dots, x_{n_L})$, be the input to a ReLU neural net. The optimal policies $\bm{\pi}^*(x)$ and $\bm{\sigma}^*(x)$ for the corresponding  ReLU net game satisfy
\begin{equation}
\label{GameOP1}
 \bm{\pi}^*(x)((l,i,+))=
\begin{cases}
  1, & \text{if } \sum_j W^l_{i,j}y^{l+1}_j +b^l_i > 0, \\
  0  & \text{if }\sum_j W^l_{i,j}y^{l+1}_j+b^l_i < 0 
\end{cases}
\end{equation}
and\todo{SG: put strict inequalities, for weak there is a tie}
\begin{equation}
\label{GameOP2}
\bm{\sigma}^*(x)((l,i,-))=
1- \bm{\pi}^*(x)(l,i+)\enspace .
\end{equation}
When $\sum_j W^l_{i,j}y^{l+1}_j +b^l_i =0$, the choice of the ``stop'' or ``continue'' actions
is indifferent.
\end{proposition}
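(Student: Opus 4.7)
The plan is to specialize the turn-based optimal-policy formulas \eqref{Game opt policy 1}--\eqref{Game opt policy 2} to the two-action stopping game of \Cref{sec-turn}, and then simplify using the identity $V^{l+1}_{j\pm}(x)=\pm y^{l+1}_j$ furnished by \Cref{theorem 1}. Since the $\arg\max$ (resp.\ $\arg\min$) over a two-element set is pinned down by the sign of the difference of the two candidate values, at each state I only have to compute the continue value, the stop value being $0$.

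First I fix a \textit{Max} state $(l,i,+)$. The ``continue'' branch in \eqref{e-Val plus} is
\[
b^l_i + \gamma^l_i\Bigl[\sum_{W^l_{i,j}\geq 0} P^l_{i+,j+}\,V^{l+1}_{j+}(x) + \sum_{W^l_{i,j}\leq 0}P^l_{i+,j-}\,V^{l+1}_{j-}(x)\Bigr].
\]
Substituting $V^{l+1}_{j\pm}=\pm y^{l+1}_j$ and the explicit probabilities \eqref{game prob}, the bracketed sum collapses to $(\gamma^l_i)^{-1}\sum_j W^l_{i,j}\,y^{l+1}_j$ via exactly the telescoping $(W^l_{i,j})^+-(W^l_{i,j})^- = W^l_{i,j}$ used in the inductive step of the proof of \Cref{theorem 1}. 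Hence the continue value equals the affine pre-activation $\sum_j W^l_{i,j}\,y^{l+1}_j+b^l_i$, and comparing with the stop value $0$ via \eqref{Game opt policy 1} produces \eqref{GameOP1} immediately.

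The \textit{Min} state $(l,i,-)$ is handled dually: the same substitutions turn the continue branch of \eqref{e-Val minus} into $-\bigl(\sum_j W^l_{i,j}\,y^{l+1}_j+b^l_i\bigr)$, and \eqref{Game opt policy 2} selects the smaller between this scalar and $0$. Thus $\bm{\sigma}^*((l,i,-))$ is again governed by the sign of the same affine form, yielding \eqref{GameOP2}. The tie case $\sum_j W^l_{i,j}\,y^{l+1}_j+b^l_i=0$ corresponds to $y^l_i=V^l_{i+}(x)=-V^l_{i-}(x)=0$; both ``stop'' and ``continue'' then attain the common value at the relevant state, which is exactly why the proposition only asserts strict inequalities.

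I do not anticipate a genuine obstacle: all of the ingredients (the turn-based Shapley-Bellman recursions \eqref{Turnvalmax}--\eqref{Turnvalmin}, the $\arg\max$/$\arg\min$ optimality characterization \eqref{Game opt policy 1}--\eqref{Game opt policy 2}, and \Cref{theorem 1}) are already in place, and the argument is a one-step backward computation identical in spirit to the induction already carried out for \Cref{theorem 1}. The only care required is to track the coordinated sign flip between the $+$ and $-$ branches, which is why the bookkeeping is best done separately at Max and at Min states before matching them up at the end.
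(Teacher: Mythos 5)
Your overall route is the same as the paper's: specialize the turn-based optimality conditions \eqref{Game opt policy 1}--\eqref{Game opt policy 2} to the two-action stopping game, and use the identity $V^{l+1}_{j\pm}(x)=\pm y^{l+1}_j$ from \Cref{theorem 1} to collapse each continue value to the affine pre-activation. The Max-state half and your remark about ties are correct. The gap is in the last step at the Min states. You correctly compute that the continue branch of \eqref{e-Val minus} equals $-\bigl(\sum_j W^l_{i,j}y^{l+1}_j+b^l_i\bigr)$ and that Min selects the smaller of this quantity and $0$. But that selection gives $\bm{\sigma}^*((l,i,-))=1$ (continue) precisely when $-\bigl(\sum_j W^l_{i,j}y^{l+1}_j+b^l_i\bigr)<0$, i.e.\ precisely when $\sum_j W^l_{i,j}y^{l+1}_j+b^l_i>0$ --- which is exactly the condition under which $\bm{\pi}^*((l,i,+))=1$. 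So your computation yields $\bm{\sigma}^*((l,i,-))=\bm{\pi}^*((l,i,+))$, the \emph{negation} of \eqref{GameOP2}. The phrase ``again governed by the sign of the same affine form, yielding \eqref{GameOP2}'' conceals an unjustified reversal of the inequality; as written, the final step does not follow from what precedes it.

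For what it is worth, the tension is not of your making: the paper's own proof states the Min optimality condition in terms of the \emph{Max}-state continue value $\sum_j W^l_{i,j}y^{l+1}_j+b^l_i$ with the inequalities flipped, whereas the actual Min-state Shapley branch is the negated quantity you computed. A direct sanity check confirms your computation: when the neuron is active, $V^l_{i-}(x)=-y^l_i<0$ can only be attained by continuing at $(l,i,-)$, and when it is inactive both values are $0$ and both players stop; hence the two players take the same stop/continue decision at the paired states. You should either prove the corrected statement $\bm{\sigma}^*((l,i,-))=\bm{\pi}^*((l,i,+))$ or explicitly flag the discrepancy with \eqref{GameOP2}; silently asserting \eqref{GameOP2} after a computation that contradicts it is a genuine error in the proof.
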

\begin{proof}

We saw in  \eqref{Game opt policy 1} and \eqref{Game opt policy 2}, that in a turn based game we can find the   optimal policies that realize the value of the game.
In our game they are as follows: for the \textit{Max player}
\begin{equation}
\label{GameOP1-bis}
 \bm{\pi}^*(x)((l,i,+))=
\begin{cases}
  1, & \text{if } \gamma^l_i [\sum_{W_{i,j} \geq 0} P^l_{i+,j+} V^{l+1}_{j+}(x) +
        \sum_{W_{i,j} \leq 0} P^l_{i+,j-} V^{l+1}_{j-}(x)]
        +b^l_i > 0, \\
  0  & \text{if }\gamma^l_i [\sum_{W_{i,j} \geq 0} P^l_{i+,j+} V^{l+1}_{j+}(x) +
        \sum_{W_{i,j} \leq 0} P^l_{i+,j-} V^{l+1}_{j-}(x)]
        +b^l_i < 0 .
\end{cases}
\end{equation}
and for the \textit{Min} player 
\begin{equation}
\label{GameOP2-bis}
 \bm{\sigma}^*(x)((l,i,-))=
\begin{cases}
  1, & \text{if } \gamma^l_i [\sum_{W_{i,j} \geq 0} P^l_{i+,j+} V^{l+1}_{j+} (x)+
        \sum_{W_{i,j} \leq 0} P^l_{i+,j-} V^{l+1}_{j-}(x)]
        +b^l_i < 0, \\
  0  & \text{if }\gamma^l_i [\sum_{W_{i,j} \geq 0} P^l_{i+,j+} V^{l+1}_{j+}(x) +
        \sum_{W_{i,j} \leq 0} P^l_{i+,j-} V^{l+1}_{j-}(x)]
        +b^l_i > 0 .
\end{cases}
\end{equation}
Or as we saw in the proof of Theorem \ref{theorem 1} we can simplify this to 
\begin{equation}
\label{GameOP1-ter}
 \bm{\pi}^*(x)((l,i,+))=
\begin{cases}
  1, & \text{if } \sum_j W^l_{i,j}y^{l+1}_j +b^l_i < 0, \\
  0  & \text{if }\sum_j W^l_{i,j}y^{l+1}_j+b^l_i < 0 .
\end{cases}
\end{equation}
and similarly for $\bm{\sigma}^*(x)((l,i,-))$.
\end{proof}
We see that given the input to the ReLU net we get these two Boolean vectors $\bm{\pi}^*(x)$  and $\bm{\sigma}^*(x)$ associated to it, which are exactly the optimal policies for the \textit{Max} and \textit{Min} player of the corresponding ReLU net game. 

\begin{proposition}
 Let $x\coloneqq(x_1,\dots x_{n_L})$ be the input to a ReLU net. Consider $x$ as the final reward for the corresponding ReLU net game. 
Let $\bm{\pi}^*(x)$ and $\bm{\sigma}^*(x)$ be the optimal policies for the corresponding ReLU net game. Let
$\Path^{\bm{\pi}^*(x),\bm{\sigma}^*(x)}_{(1,i+)}$ be the set of paths starting at layer 1, state $(1,i+)$  of the game (last layer of the ReLU net) and proceeding according to the optimal policies, then 
\begin{equation}
\label{Vop}
y^1_i=V^1_{i+}(x)=
\sum_{\alpha \in \Path^{\bm{\pi}^*(x),\bm{\sigma}^*(x)}_{(1,i+)}} r(\alpha)
\prod_{\nu=0}^{\len(\alpha)-1} 
P^{1+\nu}_{\alpha(1+\nu),\alpha(1+\nu+1)} 
\end{equation}
where
\begin{equation}
r(\alpha)\coloneqq\sum_{\nu=0}^{\len(\alpha)-1}
\sgn((\alpha(l+\nu))\left(\prod_{\mu=1}^{l+\nu-1}  \gamma^\mu\right)
b^{l+\nu}_{\alpha(l+\nu)} 
+\delta_{\len(\alpha),L}
\left(\prod_{\mu=1}^{L-1}  \gamma^\mu\right)
\Phi_{\alpha(L)}(x)
\end{equation}
\end{proposition}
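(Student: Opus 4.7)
The claim will follow by concatenating three already-established facts. First, the previous proposition furnishes, for any deterministic pair of policies $(\bm{\pi}, \bm{\sigma})$, the policy-dependent path-integral expression
\[
V^{l,\bm{\pi},\bm{\sigma}}_{i+}(x) = \sum_{\alpha \in \Path^{\bm{\pi},\bm{\sigma}}_{(l,i+)}} r(\alpha) \prod_{\nu=0}^{\len(\alpha)-1} P^{l+\nu}_{\alpha(l+\nu), \alpha(l+\nu+1)},
\]
valid at any initial state $(l,i+)$, with $r(\alpha)$ given by the formula displayed just above~\eqref{eq-ralpha}. Instantiating this expression at $l = 1$ and at the specific deterministic policy pair $(\bm{\pi}^*(x), \bm{\sigma}^*(x))$ constructed in \Cref{optimal pol} produces exactly the right-hand side of the asserted identity.

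Second, the saddle-point relation~\eqref{e-def-val} applied at the initial state yields $V^1_{i+}(x) = V^{1, \bm{\pi}^*(x), \bm{\sigma}^*(x)}_{i+}(x)$, provided $(\bm{\pi}^*(x), \bm{\sigma}^*(x))$ is indeed a saddle point. This is the content of \Cref{optimal pol}, itself derived from the turn-based Shapley-Bellman equations~\eqref{Turnvalmax}--\eqref{Turnvalmin} together with the one-sided arg-max/arg-min characterizations~\eqref{Game opt policy 1}--\eqref{Game opt policy 2}. Third, by \Cref{theorem 1}, $y^1_i = V^1_{i+}(x)$. Chaining these three equalities yields the claimed path-integral representation at the output neuron.

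The only subtle point, which I would verify carefully, is the saddle-point status of $(\bm{\pi}^*(x), \bm{\sigma}^*(x))$ at ``tie'' neurons where $\sum_j W^l_{i,j} y^{l+1}_j + b^l_i = 0$. In that degenerate case both actions (stop and continue) deliver identical continuation values, so either choice preserves optimality, and the backward induction on $l$ (from $l = L$ down to $l = 1$) used to justify \Cref{optimal pol} goes through unchanged. Beyond this easily dispatched point, the proposition is essentially a repackaging of the path-integral formula at optimal policies, with no genuine obstacle; the work was already done when establishing the previous proposition and \Cref{optimal pol}.
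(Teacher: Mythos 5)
Your proposal is correct and follows essentially the same route as the paper: the paper's own proof simply specializes the path-integral formula of the preceding proposition to the optimal policy pair, noting that the maximin is attained there, which is exactly your chain of three equalities. Your extra remark on the tie case (where the stop/continue choice is indifferent) is a reasonable precision that the paper handles in the statement of the optimal-policy proposition rather than in this proof.
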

\begin{proof}
  Specialize~\eqref{eq-ralpha} and~\eqref{e-path1} observing that the maximin
  in~\eqref{e-path1} is attained
  by the optimal policies $\bm{\pi}^*(x)$ and $\bm{\sigma}^*(x)$.\todo{SG: brevity}
\end{proof}

Figure~\ref{fig:game-paths}
depicts some possible game trajectories for a given input $x$ and corresponding optimal policies.

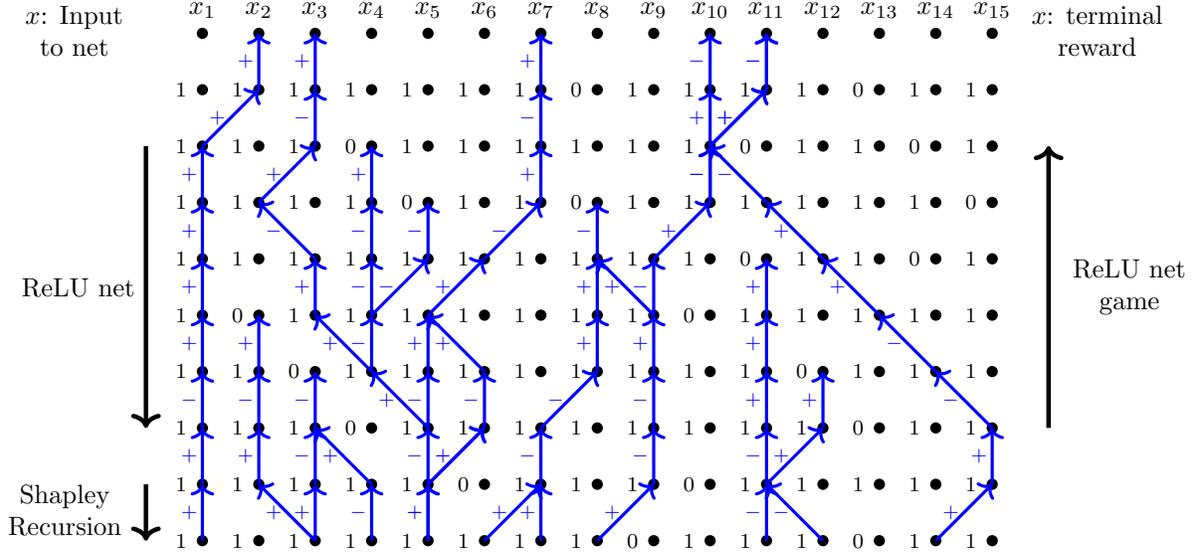
\begin{figure}[ht]
  \begin{center}
\begin{tikzpicture}[scale=0.75]

\foreach \x/\v in {
    0/1,1/1,2/0,3/1,4/1,5/1,6/1,7/1,8/1,9/0,10/0,11/1,12/1,13/1,14/1
} {
    \pgfmathtruncatemacro{\idx}{\x + 1} %
    \node[circle, fill=black, inner sep=1.5pt,
          label=above:$x_{\idx}$] at (\x,9) {};
}

        \foreach \x/\v in {0/1,1/1,2/1,3/1,4/1,5/1,6/1,7/0,8/1,9/1,10/1,11/1,12/0,13/1,14/1}
            \node[circle, fill=black, inner sep=1.5pt, label=left:{\footnotesize\v}] at (\x,8) {};

        \foreach \x/\v in {0/1,1/1,2/1,3/0,4/1,5/1,6/1,7/1,8/1,9/1,10/0,11/1,12/1,13/0,14/1}
            \node[circle, fill=black, inner sep=1.5pt, label=left:{\footnotesize \v}] at (\x,7) {};

        \foreach \x/\v in {0/1,1/1,2/1,3/1,4/0,5/1,6/1,7/0,8/1,9/1,10/1,11/1,12/1,13/1,14/0}
            \node[circle, fill=black, inner sep=1.5pt, label=left:{\footnotesize \v}] at (\x,6) {};

        \foreach \x/\v in {0/1,1/1,2/1,3/1,4/1,5/1,6/1,7/1,8/1,9/1,10/0,11/1,12/1,13/0,14/1}
            \node[circle, fill=black, inner sep=1.5pt, label=left:{\footnotesize \v}] at (\x,5) {};

        \foreach \x/\v in {0/1,1/0,2/1,3/1,4/1,5/1,6/1,7/1,8/1,9/0,10/1,11/1,12/1,13/1,14/1}
            \node[circle, fill=black, inner sep=1.5pt, label=left:{\footnotesize \v}] at (\x,4) {};

        \foreach \x/\v in {0/1,1/1,2/0,3/1,4/1,5/1,6/1,7/1,8/1,9/1,10/1,11/0,12/1,13/1,14/1}
            \node[circle, fill=black, inner sep=1.5pt, label=left:{\footnotesize \v}] at (\x,3) {};

        \foreach \x/\v in {0/1,1/1,2/1,3/0,4/1,5/1,6/1,7/1,8/1,9/1,10/1,11/1,12/0,13/1,14/1}
            \node[circle, fill=black, inner sep=1.5pt, label=left:{\footnotesize \v}] at (\x,2) {};

        \foreach \x/\v in {0/1,1/1,2/1,3/1,4/1,5/0,6/1,7/1,8/1,9/0,10/1,11/1,12/1,13/1,14/1}
            \node[circle, fill=black, inner sep=1.5pt, label=left:{\footnotesize \v}] at (\x,1) {};

        \foreach \x/\v in {0/1,1/1,2/1,3/1,4/1,5/1,6/1,7/1,8/0,9/1,10/1,11/1,12/0,13/1,14/1}
            \node[circle, fill=black, inner sep=1.5pt, label=left:{\footnotesize \v}] at (\x,0) {};

        \draw[very thick, blue, ->] (0,0) -- node[midway,left=-2pt] {{\scriptsize$+$}} (0,1);
        \draw[very thick, blue, ->] (0,1) -- node[midway,left=-2pt] {{\scriptsize$+$}} (0,2);
        \draw[very thick, blue, ->] (0,2) -- node[midway,left=-2pt] {\scriptsize$-$} (0,3);
        \draw[very thick, blue, ->] (0,3) -- node[midway,left=-2pt] {{\scriptsize$+$}} (0,4);

        \draw[very thick, blue, ->] (0,4) -- node[midway,left=-2pt] {{\scriptsize$+$}} (0,5);

        \draw[very thick, blue, ->] (0,5) -- node[midway,left=-2pt] {{\scriptsize$+$}} (0,6);

        \draw[very thick, blue, ->] (0,6) -- node[midway,left=-2pt] {{\scriptsize$+$}} (0,7);

        \draw[very thick, blue, ->] (0,7) -- node[midway,left=-2pt] {{\scriptsize$+$}} (1,8);

        \draw[very thick, blue, ->] (1,8) -- node[midway,left=-2pt] {{\scriptsize$+$}} (1,9);

\draw[very thick, blue, ->] (1,3) -- node[midway,left=-2pt] {{\scriptsize$+$}} (1,4);
        \draw[very thick, blue, ->] (2,0) -- node[midway,left=-2pt] {{\scriptsize$+$}} (1,1);
        \draw[very thick, blue, ->] (1,1) -- node[midway,left=-2pt] {{\scriptsize$+$}} (1,2);
        \draw[very thick, blue, ->] (1,2) -- node[midway,left=-2pt] {\scriptsize$-$} (1,3);

        \draw[very thick, blue, ->] (3,0) -- node[midway,left=-2pt] {\scriptsize$-$} (3,1);
        \draw[very thick, blue, ->] (3,1) -- node[midway,left=-2pt] {{\scriptsize$+$}} (2,2);
        \draw[very thick, blue, ->] (3,3) -- node[midway,left=-2pt] {\scriptsize$-$} (3,4);
        \draw[very thick, blue, ->] (3,4) -- node[midway,left=-2pt] {\scriptsize$-$} (4,5);
        \draw[very thick, blue, ->] (3,6) -- node[midway,left=-2pt] {{\scriptsize$+$}} (3,7);

        \draw[very thick, blue, ->] (4,0) -- node[midway,left=-2pt] {{\scriptsize$+$}} (4,1);
        \draw[very thick, blue, ->] (4,1) -- node[midway,left=-2pt] {\scriptsize$-$} (5,2);
        \draw[very thick, blue, ->] (5,2) -- node[midway,left=-2pt] {\scriptsize$-$} (5,3);
        \draw[very thick, blue, ->] (5,3) -- node[midway,left=-2pt] {{\scriptsize$+$}} (4,4);
        \draw[very thick, blue, ->] (4,5) -- node[midway,left=-2pt] {\scriptsize$-$} (4,6);

        \draw[very thick, blue, ->] (6,0) -- node[midway,left=-2pt] {{\scriptsize$+$}} (6,1);
        \draw[very thick, blue, ->] (6,1) -- node[midway,left=-2pt] {\scriptsize$-$} (6,2);
        \draw[very thick, blue, ->] (6,2) -- node[midway,left=-2pt] {\scriptsize$-$} (7,3);
        \draw[very thick, blue, ->] (7,3) -- node[midway,left=-2pt] {{\scriptsize$+$}} (7,4);
        \draw[very thick, blue, ->] (8,4) -- node[midway,left=-2pt] {{\scriptsize$+$}} (7,5);
        \draw[very thick, blue, ->] (7,5) -- node[midway,left=-2pt] {\scriptsize$-$} (7,6);

        \draw[very thick, blue, ->] (7,0) -- node[midway,left=-2pt] {{\scriptsize$+$}} (8,1);
        \draw[very thick, blue, ->] (8,1) -- node[midway,left=-2pt] {\scriptsize$-$} (8,2);
        \draw[very thick, blue, ->] (8,2) -- node[midway,left=-2pt] {\scriptsize$-$} (8,3);
        \draw[very thick, blue, ->] (10,3) -- node[midway,left=-2pt] {{\scriptsize$+$}} (10,4);

\draw[very thick, blue, ->] (10,4) -- node[midway,left=-2pt] {{\scriptsize$+$}} (10,5);
        \draw[very thick, blue, ->] (11,0) -- node[midway,left=-2pt] {\scriptsize$-$} (10,1);
        \draw[very thick, blue, ->] (10,1) -- node[midway,left=-2pt] {{\scriptsize$+$}} (11,2);
        \draw[very thick, blue, ->] (11,2) -- node[midway,left=-2pt] {{\scriptsize$+$}} (11,3);

        \draw[very thick, blue, ->] (5,0) -- node[midway,left=-2pt] {{\scriptsize$+$}} (6,1);
        \draw[very thick, blue, ->] (4,1) -- node[midway,left=-2pt] {\scriptsize$-$} (4,2);
        \draw[very thick, blue, ->] (4,2) -- node[midway,left=-2pt] {\scriptsize$-$} (4,3);
        \draw[very thick, blue, ->] (4,3) -- node[midway,left=-2pt] {{\scriptsize$+$}} (4,4);
        \draw[very thick, blue, ->] (2,4) -- node[midway,left=-2pt] {{\scriptsize$+$}} (2,5);
        \draw[very thick, blue, ->] (2,5) -- node[midway,left=-2pt] {\scriptsize$-$} (1,6);
        \draw[very thick, blue, ->] (1,6) -- node[midway,left=-2pt] {{\scriptsize$+$}} (2,7);
        \draw[very thick, blue, ->] (2,7) -- node[midway,left=-2pt] {\scriptsize$-$} (2,8);
        \draw[very thick, blue, ->] (2,8) -- node[midway,left=-2pt] {{\scriptsize$+$}} (2,9);

\draw[very thick, blue, ->] (4,2) -- node[midway,left=-2pt] {{\scriptsize$+$}} (3,3);

\draw[very thick, blue, ->] (3,3) -- node[midway,left=-2pt] {{\scriptsize$+$}} (2,4);

        \draw[very thick, blue, ->] (10,0) -- node[midway,left=-2pt] {\scriptsize$-$} (10,1);
        \draw[very thick, blue, ->] (10,1) -- node[midway,left=-2pt] {\scriptsize$-$} (10,2);
        \draw[very thick, blue, ->] (10,2) -- node[midway,left=-2pt] {{\scriptsize$+$}} (10,3);
        \draw[very thick, blue, ->] (8,3) -- node[midway,left=-2pt] {{\scriptsize$+$}} (8,4);
        \draw[very thick, blue, ->] (8,4) -- node[midway,left=-2pt] {\scriptsize$-$} (8,5);
        \draw[very thick, blue, ->] (9,6) -- node[midway,left=-2pt] {\scriptsize$-$} (9,7);
        \draw[very thick, blue, ->] (9,7) -- node[midway,left=-2pt] {{\scriptsize$+$}} (9,8);
        \draw[very thick, blue, ->] (9,8) -- node[midway,left=-2pt] {\scriptsize$-$} (9,9);

        \draw[very thick, blue, ->] (13,0) -- node[midway,left=-2pt] {{\scriptsize$+$}} (14,1);
        \draw[very thick, blue, ->] (14,1) -- node[midway,left=-2pt] {{\scriptsize$+$}} (14,2);
        \draw[very thick, blue, ->] (14,2) -- node[midway,left=-2pt] {\scriptsize$-$} (13,3);
        \draw[very thick, blue, ->] (13,3) -- node[midway,left=-2pt] {\scriptsize$-$} (12,4);
        \draw[very thick, blue, ->] (12,4) -- node[midway,left=-2pt] {{\scriptsize$+$}} (11,5);
        \draw[very thick, blue, ->] (11,5) -- node[midway,left=-2pt] {{\scriptsize$+$}} (10,6);
        \draw[very thick, blue, ->] (10,6) -- node[midway,left=-2pt] {\scriptsize$-$} (9,7);
        \draw[very thick, blue, ->] (9,7) -- node[midway,left=-2pt] {{\scriptsize$+$}} (10,8);
        \draw[very thick, blue, ->] (10,8) -- node[midway,left=-2pt] {\scriptsize$-$} (10,9);

        \draw[very thick, blue, ->] (2,0) -- node[midway,left=-2pt] {{\scriptsize$+$}} (2,1);
        \draw[very thick, blue, ->] (2,1) -- node[midway,left=-2pt] {\scriptsize$-$} (2,2);
        \draw[very thick, blue, ->] (2,2) -- node[midway,left=-2pt] {\scriptsize$-$} (2,3);
        \draw[very thick, blue, ->] (4,4) -- node[midway,left=-2pt] {{\scriptsize$+$}} (5,5);
        \draw[very thick, blue, ->] (5,5) -- node[midway,left=-2pt] {\scriptsize$-$} (6,6);
        \draw[very thick, blue, ->] (6,6) -- node[midway,left=-2pt] {{\scriptsize$+$}} (6,7);
        \draw[very thick, blue, ->] (6,7) -- node[midway,left=-2pt] {\scriptsize$-$} (6,8);
        \draw[very thick, blue, ->] (6,8) -- node[midway,left=-2pt] {{\scriptsize$+$}} (6,9);

        \draw[very thick, blue, ->] (4,0) -- node[midway,left=-2pt] {\scriptsize$-$} (4,1);
        \draw[very thick, blue, ->] (4,1) -- node[midway,left=-2pt] {{\scriptsize$+$}} (5,2);
        \draw[very thick, blue, ->] (7,4) -- node[midway,left=-2pt] {{\scriptsize$+$}} (7,5);
        \draw[very thick, blue, ->] (8,5) -- node[midway,left=-2pt] {{\scriptsize$+$}} (9,6);
        \draw[very thick, blue, ->] (9,6) -- node[midway,left=-2pt] {\scriptsize$-$} (9,7);
        \draw[very thick, blue, ->] (9,7) -- node[midway,left=-2pt] {{\scriptsize$+$}} (10,8);
        \draw[very thick, blue, ->] (10,8) -- node[midway,left=-2pt] {\scriptsize$-$} (10,9);

\draw[very thick, blue, ->] (3,4) -- node[midway,left=-2pt] {\scriptsize$-$} (3,5);

        \draw[very thick, blue, ->] (3,5) -- node[midway,left=-2pt]{\scriptsize$-$} (3,6);

        \draw[thick, ->] (15,2) -- (15,7) node[midway,right]{\begin{tabular}{c}ReLU net\\ game\end{tabular}};

        \draw[thick, ->] (-1,7) -- (-1,2) node[midway,left]{ReLU net};

\draw[thick, ->] (-1,1) -- (-1,0) node[midway,left]{\begin{tabular}{c}Shapley\\ Recursion\end{tabular}};

\node[right] at (14.3,9) {\begin{tabular}{c} $x$: terminal\\ reward\end{tabular}};
\node[left] at (-1,9) {\begin{tabular}{c} $x$: Input\\ to net\end{tabular}};

\end{tikzpicture}
\end{center}
    \caption{The grid indicates the neurons in a ReLU net. 
    Each neuron $(l,i)$ corresponds to 2 game states, one $(l,i+)$  where \textit{Max} plays and one $(l,i-)$ where \textit{Min} plays, but we don't indicate this in the figure so as not to clutter it.
    A given input $x\coloneqq(x_1,\dots,x_{15})$ to the ReLU net is interpreted as the terminal reward of the ReLU net game. The corresponding   optimal policies $\bm{\pi}^*(x):S^+ \to \{0,1\}$ and $\bm{\sigma}^*(x): S^- \to \{0,1\}$ determine 2 Boolean patterns on the vertices (one for the \textit{Max} labeled states and one for the \textit{Min}, (which is exactly the opposite: see Prop. \ref{optimal pol}). 
    \textbf{Game paths} (drawn in blue) contributing to the value of the ReLU net game (which is equal to the output of the ReLU net) for the given  $x$, start at the bottom row and proceed through $1$-labeled vertices, ending either at a 0-labeled vertex before reaching the top or when they reach the top. A plus sign on an edge indicates that the corresponding weight is positive and therefore the same player keep playing. A minus sign indicates the corresponding weight is negative and therefore the player changes. The sum over paths \eqref{Vop} gives the value of the game which is equal to the output of the neural net.}
    \label{fig:game-paths}
\end{figure}

\begin{remark}
    Note from Eqn~\eqref{Vop} that once we fix the policy (in particular the optimal policy here)  the problem becomes linear. In fact it becomes a Markov chain with rewards. We can also think of it as a random walk where the particle moving stops at the first stopping state it encounters.
\end{remark}

\begin{remark}
The map computed by the ReLU net is piecewise-linear and the linear pieces correspond exactly to optimal policies of the corresponding  ReLU net game. Namely a break from a linear piece to another, means that the optimal policy changes as a function of terminal reward for the game (i.e.\ input to the neural net).
\end{remark}
\subsection{Bounds on ReLU net output given bounds on input}

The representation of the output of the neural network as the value of a game
  provides a lift of the original neural network by a map
  which is {\em order preserving} with respect to the coordinate-wise order.
  
  More precisely, consider $x,x'\in \R^{k_L}$, together
  with the terminal reward $\bar{\phi}(x,x')$ such that $\bar{\phi}_{L,i+}(x,x')\coloneqq x_i$
  and $\bar{\phi}_{L,i-}(x, x')\coloneqq x'_i$, so that
  $\phi_{L,i,\epsilon}(x)=\bar{\phi}_{L,i,\epsilon}(x,-x)$, where $\epsilon=\pm$.
  \textit{This defines a more general game, extending the one of Section~\ref{sec-turn} -- the only difference being that the terminal cost $\bar{\phi}$ is general
  whereas the cost of the ReLU net game if of the form $\phi_L=(x,-x)$.}
  
  Let $\bar V^l_{i,\epsilon}(x,x')$ be the value of this new game,
  defined as per
  \eqref{e-def-vpisigma},\eqref{e-def-value},
  replacing $\phi_{L,i_L}(x)$ by $\bar{\phi}_{L,i_L}(x,x')$,
  and observe that $\bar V^l_{i,\epsilon}(x,x')$ still satisfies
  the Shapley-Bellman equations~\eqref{e-Val plus},\eqref{e-Val minus}.
  Since all the coefficients $P^l_{i\epsilon,j\eta}$
  arising in~\eqref{e-Val plus},\eqref{e-Val minus}, with
$\epsilon,\eta=\pm$ are nonnegative,
  it follows that the map $(x,x')\mapsto \bar V^l(x,x')$,
  obtained by composing order preserving ``layers'', is order
  preserving. Moreover, the output $y_i^l$ is obtained by specializing
  $y_i^l=\bar V^l_{i,+}(x,-x)$. This order preserving representation
  allows one to propagate bounds from the input to the output
  of the neural network, as shown by the following result.
\begin{proposition}\label{prop-bound}
  If the input $x=(x_1,\dots, x_n)$ to a ReLU neural net, belongs (coordinate-wise)  to some interval $[\underline{x},\overline{x}]$,
  then, we have that the output $y_i^l \in [\bar V^l_i(\underline{x},-\overline{x}),
    \bar V^l_i(\overline{x},-\underline{x})]$.
    \end{proposition}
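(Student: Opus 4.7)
The plan is to prove monotonicity of the map $(x,x')\mapsto \bar V^l_{i,\epsilon}(x,x')$ with respect to the coordinate-wise (product) order on $\R^{k_L}\times\R^{k_L}$, by backward induction on $l$, and then specialize $y_i^l=\bar V^l_{i,+}(x,-x)$.

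For the base case $l=L$, the boundary conditions give $\bar V^L_{i,+}(x,x')=x_i$ and $\bar V^L_{i,-}(x,x')=x'_i$, which are clearly nondecreasing (in fact linear and monotone) in each of their arguments. For the inductive step, I would fix $l<L$ and suppose that for every $j$ and every $\eta\in\{+,-\}$, the function $(x,x')\mapsto \bar V^{l+1}_{j,\eta}(x,x')$ is nondecreasing coordinate-wise. The Shapley--Bellman equations~\eqref{e-Val plus}--\eqref{e-Val minus} (which still hold for $\bar V$, since only the terminal data were generalized) express $\bar V^l_{i,+}(x,x')$ as $\max\bigl(0,\,L^{l,+}_i(x,x')+b^l_i\bigr)$ where $L^{l,+}_i$ is a nonnegative linear combination of the values $\bar V^{l+1}_{j,\eta}(x,x')$; by the induction hypothesis $L^{l,+}_i$ is nondecreasing, and the scalar map $t\mapsto\max(0,t+b^l_i)$ is nondecreasing, so $\bar V^l_{i,+}$ is nondecreasing. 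The argument for $\bar V^l_{i,-}$ is identical, using that $\min(0,\cdot)$ is also nondecreasing (it is the composition of the nondecreasing scalar map $t\mapsto\min(0,t-b^l_i)$ with a nonnegative linear combination of the $\bar V^{l+1}_{j,\eta}$).

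Now specialize. By Theorem~\ref{theorem 1}, for input $x$ to the neural network we have $y_i^l=V^l_{i,+}(x)=\bar V^l_{i,+}(x,-x)$. If $\underline{x}\le x\le \overline{x}$ coordinate-wise, then $-\overline{x}\le -x\le -\underline{x}$, so $(\underline{x},-\overline{x})\le (x,-x)\le (\overline{x},-\underline{x})$ in the product order. Applying the monotonicity of $\bar V^l_{i,+}$ established above yields
\begin{equation*}
\bar V^l_{i,+}(\underline{x},-\overline{x})\;\le\;\bar V^l_{i,+}(x,-x)\;=\;y_i^l\;\le\;\bar V^l_{i,+}(\overline{x},-\underline{x}),
\end{equation*}
which is exactly the claimed interval.

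The proof is essentially immediate from the preceding discussion, and the only step that needs any care is noticing that the two arguments of $\bar V$ must be moved in opposite directions when one substitutes $(x,-x)$; the fact that $P^l_{i\pm,j\pm}\ge 0$ and the scalar nonlinearities $\max(0,\cdot)$ and $\min(0,\cdot)$ are both nondecreasing is what makes the induction go through without sign flips. No genuine obstacle arises, because the representation in~\eqref{e-Val plus}--\eqref{e-Val minus} was constructed precisely to be an order-preserving lift.
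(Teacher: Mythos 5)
Your proof is correct and follows essentially the same route as the paper: the paper establishes the order-preservation of $(x,x')\mapsto \bar V^l(x,x')$ in the discussion preceding the proposition (from the nonnegativity of the coefficients in~\eqref{e-Val plus}--\eqref{e-Val minus} and composition of order-preserving layers), and the proof then applies it to $(\underline{x},-\overline{x})\le(x,-x)\le(\overline{x},-\underline{x})$ exactly as you do. Your backward induction merely makes the paper's monotonicity argument fully explicit.
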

    \begin{proof}
     We have   $\underline{x}\leq x\leq \overline{x}$ therefore 
     $-\overline{x}\leq -x\leq -\underline{x}$. Since the Shapley map $\bar V$ is order preserving  it follows that 
     $\bar V^l_i(\underline{x},-\overline{x})\leq \bar{V}^l(x,-x) \leq
    \bar V^l_i(\overline{x},-\underline{x})].$
    But $y^l_i=\bar{V}^l(x,-x)$,
    thereby proving the result.
    \end{proof}

\begin{remark}[Link with abstract interpretation]
  Theorem~\ref{theorem 1} can be interpreted in the light of \textit{static analysis
  of a program by abstract interpretation}. The latter method is a way
  to derive automatically program invariants, asserting that the
  vector of variables of the program stays
  in a parameterized set with a specified
  structure (box, polyhedron, ellipsoid,\dots),
  we refer the reader to~\cite{CC:77,Min2017} for
  background. 
  
  It turns out that the present Shapley operator
  coincides with an abstract semantic map obtained
  by applying abstract interpretation using boxes as a parameterized family of sets~\cite[\S~4.5]{Min2017}, thinking of the Neural network as a computer program.
We note that an analogy between abstract interpretation
and game theory was noted in~\cite{adjegaubertgoubault10}, in which a policy-type iteration was developed to compute
polyhedral program invariants. 

Beyond this analogy, the present results
show an actual ``embedding'' in zero-sum games.
In practice, boxes are generally known to provide coarse invariants.
We leave it for further work to extend the present approach
to more expressive, relational, domains~\cite{SSM:05,goubault2},
\end{remark}

\subsection{Interpreting Policies as certificates }\label{sec-certificates}
We next interpret the policies in terms of certificates allowing
one to verify properties of the neural network.
Let us assume that there is a single output, so that the neural
network can be used for a binary classification problem.
Let us fix thresholds $\alpha >  \beta$ and declare
that an input $x$ is accepted if $f(x) \geq \alpha$, rejected
if $f(x)\leq \beta$, and unclassified otherwise.
To simplify the notation, for every pair of policies
$\bm{\pi},\bm{\sigma}$ of Max and Min,
we denote by $f^{\pi,\sigma}$ the map which associates to $x$ the
value $V^{1,\pi,\sigma}_{1+}(x)$. %
We set $f^\pi =\inf_\sigma f^{\pi\sigma}$
and ${}^\sigma\! f = \sup_\pi f^{\pi\sigma}$.
Observe that $f^\pi$ is
concave and piecewise-linear, as it is an infimum
of affine maps. Dually, the map  ${}^\sigma\! f$ is
convex and piecewise-linear.
\begin{proposition}[Polyhedral representation of the accepted/rejected sets]
  \label{prop-poly}
  Every policy $\pi$ of Max determines a polyhedron
  \[
  C^\pi_\alpha =\{x\in \R^n\mid f^\pi(x) \geq \alpha\} \enspace,
  \]
  and the set $\actions=\{x\mid f(x)\geq \alpha\}$ of accepted inputs satisfies
  \begin{align}
  \actions = \bigcup_\pi C^\pi_\alpha \enspace .\label{e-cover-a}
  \end{align}
Dually, every policy $\sigma$ of Min determines a polyhedron
  \[
  {}^\sigma\! C_\beta =\{x\in \R^n\mid {}^\sigma\!f(x) \leq \beta\} \enspace,
  \]
  and the set of rejected inputs $\mathcal{R}=\{x\mid f(x)\leq \beta\}$ satisfies
  \begin{align}
\mathcal{R} =   \bigcup_\sigma {}^\sigma\! C_\beta \enspace .
  \label{e-cover-r}
  \end{align}
  \end{proposition}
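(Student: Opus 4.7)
The plan is to reduce everything to the observation that, when both policies are fixed, the value map $f^{\pi,\sigma}$ is an affine function of the terminal reward $x$. This is immediate from the path-integral representation~\eqref{V given pol}: once $\pi$ and $\sigma$ are fixed, the set of admissible paths $\Path^{\pi,\sigma}_{(1,1+)}$ and their transition probabilities $\prod P^{l+\nu}_{\alpha(l+\nu),\alpha(l+\nu+1)}$ no longer depend on $x$, while the reward $r(\alpha)$ decomposes into a bias-only part, independent of $x$, and a boundary part $\pm x_j$ arising only for paths of maximal length $L$. Therefore $f^{\pi,\sigma}(x)$ is a sum of affine functionals of $x$, hence affine.

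Since the policies in Definition~\ref{Def tech} are Boolean maps on finite sets, there are finitely many pairs $(\pi,\sigma)$. Thus $f^\pi = \min_\sigma f^{\pi,\sigma}$ is a pointwise minimum of finitely many affine maps, hence concave and piecewise-linear. The set
\[
C^\pi_\alpha = \{x\in\R^n \mid f^\pi(x)\ge \alpha\} = \bigcap_{\sigma}\{x\in\R^n \mid f^{\pi,\sigma}(x)\ge \alpha\}
\]
is therefore a finite intersection of closed affine half-spaces, which is a polyhedron. The dual claim for ${}^\sigma\! C_\beta$ follows by the same argument after swapping the roles of the two players, using that ${}^\sigma\! f = \sup_\pi f^{\pi,\sigma}$ is a maximum of finitely many affine maps.

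For the covering identity~\eqref{e-cover-a}, I would combine Theorem~\ref{theorem 1} with the saddle-point equality~\eqref{e-def-val2}: the output is $f(x) = V^1_{1+}(x) = \max_\pi \min_\sigma f^{\pi,\sigma}(x) = \max_\pi f^\pi(x)$. The inclusion $\bigcup_\pi C^\pi_\alpha \subseteq \actions$ is then immediate, since $f^\pi(x)\ge \alpha$ implies $f(x)\ge f^\pi(x)\ge \alpha$. For the reverse inclusion, fix any $x$ with $f(x)\ge \alpha$; existence of a saddle point for the turn-based stopping game (equations~\eqref{Game opt policy 1}--\eqref{Game opt policy 2}) provides an optimal deterministic policy $\pi^*(x)$ for Max achieving $f^{\pi^*(x)}(x) = \min_\sigma f^{\pi^*(x),\sigma}(x) = f(x)\ge \alpha$, so $x\in C^{\pi^*(x)}_\alpha$. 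The dual identity~\eqref{e-cover-r} follows from the other form of the saddle-point equality, $f(x) = \min_\sigma {}^\sigma\! f(x)$, and an analogous use of the optimal policy $\sigma^*(x)$ of Min.

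The argument is essentially a two-line consequence of affineness and the existence of optimal deterministic policies; the only subtlety is being careful to invoke the correct side of the minimax identity in~\eqref{e-def-val2} for each half of the proposition. There is no real obstacle, but I would take care to reference the path-integral formula~\eqref{V given pol} explicitly to justify affineness of $f^{\pi,\sigma}$, since this is the unique quantitative ingredient that turns the abstract game-theoretic covering into a polyhedral one.
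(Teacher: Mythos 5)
Your proposal is correct and follows essentially the same route as the paper: polyhedrality of $C^\pi_\alpha$ and ${}^\sigma\!C_\beta$ from the fact that $f^\pi$ (resp.\ ${}^\sigma\!f$) is a finite infimum (resp.\ supremum) of affine maps, and the covering identities from the selection property $f(x)=f^{\pi^*(x)}(x)={}^{\sigma^*(x)}\!f(x)$ furnished by the optimal deterministic policies. You merely make explicit two steps the paper leaves implicit --- the affineness of $f^{\pi,\sigma}$ via the path-integral formula~\eqref{V given pol} and the two-sided inclusion argument --- which is a harmless elaboration, not a different method.
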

  
\begin{proof}
  Since $f^\pi$ is concave and piecewise-linear, the super-level set of $f^\pi$,
  $C^\pi_\alpha$ ,is a polyhedron. Dually, the sub-level set ${}^\sigma\! C_\beta$ of ${}^\sigma\! f$
  is also a polyhedron.

  Observe that the following selection properties are satisfied
\[
\forall x\in \R^n\mid \exists \sigma,\pi,\; f(x) = f^\pi(x) = {}^\sigma\! f(x)
\enspace.
\]
The covering properties~\eqref{e-cover-a},\eqref{e-cover-r}
follow readily from this property.
  \end{proof}

\begin{remark}
  The cell coverings formulae~\eqref{e-cover-a} underly a logical interpretation
  of the game: Player Max (the ``prosecutor'') wants to select a policy $\pi$
  to certify the input $x$ has a certain property $(x\in C^\pi_\alpha)$, whereas Player Min
  (the defense) wants to select
  a policy $\sigma$ to certify the opposite property $(x\in {}^\sigma\!C_\beta)$.
  \end{remark}

\section{Basics of entropically  regularized Markov decision processes}

We now want to extend 
our previous construction to softplus neural nets. 
Indeed  recall that the softplus function $g_\tau$ is given by $g_\tau(a)\coloneqq\tau \log(1+e^\frac{a}{\tau})$ where $\tau \geq 0$.

The ReLU function is approximated by the softplus function since we have $\lim_{\tau \to 0}g_\tau(a)=\max(a,0)=ReLU(a)$.
\subsection{Shannon Entropy and free energy}

In order to interpret softplus neural nets as games we first recall that log-sum-exp is the Legendre-Fenchel transform of the Shannon entropy.

Indeed let $\Delta^n$ denote the n-simplex and let $p \in \Delta^n$ be a probability distribution. Consider
its Shannon entropy
\begin{equation}
    H(p)\coloneqq-\sum_i p_i\log(p_i)
\end{equation}
Let $Q$ be any vector in $\mathbb{R}^n$. We then have  the Legendre-Fenchel transform
\begin{equation}
  \label{eq:legendre max}
   \tau \log(\sum_{i=1}^n e^{\frac{Q_i}{\tau}})=\max_{p\in \Delta^n}(\langle p,Q\rangle +\tau H(p))
\end{equation}

Recall that in statistical mechanics the left hand side of $\eqref{eq:legendre max}$ is minus the free energy of a statistical ensemble where energies are $E_i\coloneqq-Q_i$ and $\tau$ is the temperature. Moreover the function 
$\sum_{i=1}^n e^{\frac{Q_i}{\tau}}$ is the partition function.

Then as is known  from statistical mechanics,
the optimal distribution $p^*$ (the one which realizes the maximum)  is the Gibbs distribution.  Indeed we have
\begin{equation}
    \label{Gibbs}
    p^*_i=\frac{e^{-E_i}}{\sum_{i=1}^n e^{\frac{-E_i}{\tau}}}=\frac{e^{Q_i}}{\sum_{i=1}^n e^{\frac{Q_i}{\tau}}}
\end{equation}
For the distribution $p^*$ the Shannon entropy $H(p^*)$ is the Gibbs entropy.

Note that when we take the zero temperature limit $\tau \to 0$ we obtain 
\begin{equation}
\label{temp 0}
\lim_{\tau \to 0}\tau \log(\sum_{i=1}^n e^{\frac{Q_i}{\tau}})=\max_i(Q_i)
\end{equation}

This is because as $\tau \to 0$ the right hand side of \eqref{eq:legendre max} becomes the $\max$ of a linear function over the simplex, which is convex. Therefore it will attain  its max values at the extremal points of the simplex. 

Because we will work with a game where one player maximizes and the other minimizes the reward, we will also need that 
\begin{equation}
  \label{eq:legendre min}
   -\tau \log(\sum_{i=1}^n e^{\frac{-Q_i}{\tau}})=\min_{p\in \Delta^n}(\langle p,Q\rangle -\tau H(p))
\end{equation}
which follows from \eqref{eq:legendre max}. 

We start by recalling the MDP case.

\subsection{Entropically regularized MDP}

Let \(S\) be the state space, \(A\) the action space, \(P(s'\mid s,a)\) the transition kernel, \(r_t(s,a,s')\) the stage reward at time \(t\), and \(\phi(s)\) the terminal reward at time \(T\).  Introduce an entropy‐regularization parameter \(\tau>0\).

Recall \eqref{MDP back Kolm} that we have
for a given policy $\pi_t: S \to \Delta(A(s))$, that

 \begin{equation}
V_{s,\tau}^{t,\pi_t}
= \sum_{a_i\in\actions(s)} \pi_t(a_i|s)
  \Bigl[
    r(s,a_i,s')
    + \sum_{s'\in\states} \gamma^t P(s'\mid s,a_i)\,V_{s,\tau}^{t,\bm{\pi}}(s')
  \Bigr],
\end{equation}

with the terminal condition 
\[
V^T_{s,\tau} = \phi(s).
\]
We now define \textit{the entropic regularization of the MDP with parameter $\tau$, to be the addition of $-\tau \log$ of the probability of the distribution of actions, to the reward at a given state}. Therefore, according to \eqref{MDP back Kolm}, 
for a given policy $\bm{\pi}$ the Kolmogorov recursion is given by 
\begin{equation}
\label{KolmEntMDP}
V^{t,\bm{\pi}}_{s,\tau}
=
\sum_{a_i\in\actions(s)} \pi(a_i \mid s)
\Bigl[
r(s,a_i)-\tau
\log \pi(a_i \mid s)
+\sum_{s'\in\states} \gamma^t P(s' \mid s,a_i)\,V^{t+1,\bm{\pi}}_{s',\tau}  \Bigr].
\end{equation}
This regularization has appeared in several contexts, see~\cite{pmlr-v97-geist19a} for background.

The value $V^t_{s,\tau}$ is given by 
$$V^t_{s,\tau}\coloneqq\max_{\pi(\cdot|s)} V^{t,\bm{\pi}}_{s,\tau}.$$

Consequently we have the Bellman recursion
\begin{equation}
\label{MDPValent1}
V^t_{s,\tau}
=\max_{\pi(\cdot|s)}
\sum_{a_i\in\actions(s)} \pi(a_i \mid s)
\Bigl[
r(s,a_i)
-\tau\, \log \pi(a_i \mid s)
+\sum_{s'\in\states} \gamma^t P(s' \mid s,a_i)\,V^{t+1}_{s',\tau}
\Bigr].
\end{equation}

We now put 
$$p_i\coloneqq\pi(a_i|s),$$  
\begin{equation}
\label{Q}
Q^t_{\tau}(s,a_i)\coloneqq r(s,a_i)+\sum_{s'\in\states} \gamma^t P(s' \mid s,a_i)\,V^{t+1}_{s',\tau} \enspace.
\end{equation}
(When $\tau=0$, $Q^t_{\tau}$ is the usual $Q$-function arising in reinforcement learning~\cite{Bertsekas_Tsitsiklis_1996}.)
Recall that the Shannon entropy is:
$$H(\pi(\cdot|s))=-\sum_i\pi(a_i|s)\log(\pi(a_i|s),$$
We then have
\begin{equation}
V^t_{s,\tau}
=\max_{\pi(\cdot|s)}
\Bigl[\sum_{a_i\in\actions(s)} \pi(a_i \mid s)
Q^t_{\tau}(s,a_i)+\tau H(\pi(\cdot|s))\Bigr]
\end{equation}
Therefore from the  Legendre transform~\eqref{eq:legendre max}
it follows that 
 the optimal value is 
\begin{equation}
\label{EntValQ}
V^t_{s,\tau}
= \tau \log\Bigl(\sum_{a_i\in\actions(s)}
\exp\bigl(Q^t_{\tau}(s,a_i)/\tau\bigr)\Bigr).
\end{equation}

Moreover, as we already saw in \eqref{Gibbs},  
the Gibbs distribution is the optimal policy realizing the supremum in \eqref{MDPValent1}, namely
\begin{equation}
\label{Gibbs pol MDP}
\pi^*_{t,\tau}(a_i\mid s)
=\frac{\exp(\frac{Q^t_{\tau}(s,a_i)}{\tau})}
{\sum_{b\in\actions(s)}\exp(\frac{Q^t_{\tau}(s,b)}{\tau})}
=
\exp\!\Bigl(\tfrac{1}{\tau}\bigl[Q^t_{\tau}(s,a_i)-V_{t,\tau}(s)\bigr]\Bigr).
\end{equation}
\begin{remark}
We see that the value $V^t_{s,\tau}$ is the negative of the \textbf{free energy} for a statistical ensemble where the states are distributed according to the Gibbs distribution. Since the value is maximized, the free energy is minimized at the Gibbs distribution.  
\end{remark}

\section{Entropically regularized Turn‐Based  Games}
We now generalize the entropic MDP case to an entropically regularized turn based zero-sum game, by putting together Sections 6 and 3.2.
Recall that we partition the state‐space \(S = S^1 \cup S^2\cup\{\bot\}\), where
\[
S^1 = \{\text{states where Player Max plays}\}, 
\quad
S^2 = \{\text{states where Player Min plays}\},
\quad
\bot \text{ is a cemetery state.}
\]
Fix a finite horizon \(T\), discount$\gamma^t$ and entropic regularization parameter \(\tau>0\).  

Analogously to  the MDP case we define \textit{the entropic regularization of the turn based game to be the addition of $-\log$ of the probability of the distribution of actions, to  the reward of the Max player and the addition of $\log$ of the probability of the policy, to  the reward of the Min player}. 

Let \(V_t,\tau(s)\) denote the regularized value at stage \(t\) and state \(s\).
Let $\bm{\pi}$ denote the randomized policy for the Max player and $\bm{\sigma}$ denote the randomized policy for the Min player. 

Then, following the same logic as for the entropic MDP we have:

\paragraph{\textbf{Player Max’s Turn (\(s\in S^1\))}}

Set 
\begin{equation}
Q^{t,1}_{\tau}(s,a^1_i)
\coloneqq r_t(s,a^1_i)
  + \sum_{s'\in S^1 \cup S^2} \gamma^t P(s'\mid s,a^1_i)\,V^{t+1}_{s',\tau}.
\end{equation}
Then according to \eqref{Turnvalmax} and \eqref{EntValQ}, the value at a state where Player Max plays, is given by:
\begin{equation}
\label{TurnValeMax1}
V^t_{s,\tau}
= \max_{\pi(\cdot\mid s)\in\Delta(A^1(s))}
  \Biggl\{
    \sum_{a^1_i\in A^1(s)} \pi(a^1_i\mid s)\,Q^{t,1}_{\tau}(s,a^1_i)
    \;+\;\tau\,H\bigl(\pi(\cdot\mid s)\bigr)
  \Biggr\},
\end{equation}
where \(H(\pi)=-\sum_i\pi(a^1_i)\log\pi(a^1_i)\).  Equivalently we have
\begin{equation}
\label{TurnValeMax2}
V^t_s
= \tau \log\!\Biggl(
    \sum_{a^1_i\in A^1(s)}
    \exp\!\bigl(Q^{t,1}_{\tau}(s,a^1_i)/\tau\bigr)
  \Biggr),
\quad s\in S^1.
\end{equation}

\paragraph{\textbf{Player Min’s Turn (\(s\in S^2\))}}

Here we first note that we have \eqref{eq:legendre min}:
\begin{equation}
   -\tau \log \Bigl(\sum_{i=1}^n e^{\frac{-Q_i}{\tau}}\Bigr)=\min_{p\in \Delta^n}(\langle p,Q\rangle -\tau H(p))
\end{equation}

We then define
\begin{equation}
Q^{t,2}_{\tau}(s,a^2_i)
= r_t(s,a^2_i)
  + \sum_{s'\in S^1 \cup S^2} \gamma^t P(s'\mid s,a^2_i)\,V^{t+1}_{s',\tau}.
\end{equation}
The value at a state where Player Min plays is, according to \eqref{Turnvalmin}  and \eqref{EntValQ},  given by
\begin{equation}
\label{TurnValeMin1}
V^t_{s,\tau}
= \min_{\sigma(\cdot\mid s)\in\Delta(A^2(s))}
  \Biggl\{
    \sum_{a^2_i\in A^2(s)} \sigma(a^2_i\mid s)\,Q^{t,2}_{\tau}(s,a^2_i)
    \;-\;\tau\,H\bigl(\sigma(\cdot\mid s)\bigr)
  \Biggr\}.
\end{equation}
Equivalently,
\begin{equation}
\label{TurnValeMin2}
V^t_{s,\tau}
= -\,\tau \,\log\!\Biggl(
    \sum_{a^2\in A^2(s)}
    \exp\!\bigl(-\,Q^{t,2}_{\tau}(s,a^2)/\tau\bigr)
  \Biggr),
\quad s\in S^2.
\end{equation}

\paragraph{\textbf{Boundary Condition}}

At the terminal stage \(T\), set
\[
V^T_s=\phi(s).
\]

Finally we see that  the optimal policy  for the Max player is 
\begin{equation}
\label{Gibbs pol Max}
\pi^*_{t,\tau}(a_i\mid s)
=\frac{\exp(\frac{Q^t_{\tau}(s,a_i)}{\tau})}
{\sum_{b\in\actions(s)}\exp(\frac{Q^t_{\tau}(s,b)}{\tau})}
=
\exp\!\Bigl(\tfrac{1}{\tau}\bigl[Q^t_{\tau}(s,a)-V^t_{s,\tau}\bigr]\Bigr).
\end{equation}

While the optimal policy for the Min player is

\begin{equation}
\label{Gibbs pol Min}
\sigma^*_{t,\tau}(a_i\mid s)
=\frac{\exp(\frac{-Q^t_{\tau}(s,a_i)}{\tau})}
{\sum_{b\in\actions(s)}\exp(\frac{-Q^t_{\tau}(s,b)}{\tau})}
=
\exp\!\Bigl(\tfrac{1}{\tau}\bigl[-Q^t_{\tau}(s,a)+V^t_{s,\tau}\bigr]\Bigr).
\end{equation}
\section{Softplus neural net as a turn based, entropically regularized, stopping game}

We will now show that \textit{the output of a Softplus neural net is the same as the value of a two-player,  zero-sum, turn-based, stopping game which we call the Softplus net game}. This game will be the entropic regularizations of the ReLU net game.

\subsection{Reminder on Softplus neural nets}
Recall that the softplus function with temperature $\tau >0$, is 
$\phi_\tau(x)\coloneqq\tau\log(1+e^\frac{x}{\tau})$ where it is applied coordinate-wise when $x$ is vector.
Note that, \eqref{temp 0} $\lim_{\tau \to 0}\phi_\tau (x)=\max(x,0)= ReLU(x)$.

Assume as before that the network has $L$ layers and layer $l$ has $k_l$ neurons. 
Again, we number the layers starting from the output of the neural network (layer $1$) to its input (layer $L$).
The weight matrix of layer $l$ is denoted by $W^l$ and there are bias vectors $b^l \in \mathbb{R}^{k_l}$ in each layer $l$. 
The input vector is $x\in \mathbb{R}^{k_L}$.

Define the affine maps
$A_l(v)\coloneqq W^l(v)+b^l$.
Then the total output function of the net is $g_\tau(x):\mathbb{R}^{k_L} \to \mathbb{R}^{k_1}$ where 
\begin{equation}
\label{SoftNNo}
g_\tau(x)=\phi_\tau(A^1(\dots\phi_\tau(A^{L-1}(\phi_\tau(A^L(x))))\dots)).
\end{equation} 

As before, the reason for numbering the layers from the output layer to the input is so that time will move forward along the game. 
Recall from \eqref{NNo-bis} that $f(x)$, the output of the ReLU net with the same weights and biases. Clearly $\lim_{\tau \to 0}g_\tau(x)=f(x)$. 
\subsection{The Softplus net game}
To construct the Softplus net game we implement to our ReLU net game the entropic regularization for turn based games as explained in Section 7.

To that end we keep the states, transition probabilities and terminal reward as explained in Section 4. 
We modify only the state-action  rewards:

If Players $\textit{Max}$ and $\textit{Min}$ play according to policies $\bm{\pi}\coloneqq(\pi^1,\dots. \pi^L)$ and $\bm{\sigma}\coloneqq(\sigma^1,\dots , \sigma^L)$, respectively, 
with $\pi^l(\cont|i+)+\pi^l(\stop|i+)=1$
and 
$\sigma^l(\cont|i-)+\sigma^l(\stop|i-)=1$
we put for the entropically regularized rewards,
with two actions ``continue'' ($\cont$) and ``stop'' ($\stop$)
in every state,

\begin{align}
R^l(l,i+,\cont)\coloneqq b^l_i-\tau \log(\pi^l(\cont|i+))& \\
R^l(l,i+,\stop)\coloneqq-\tau \log(\pi^l(\stop|i+))& \\
R^l(l,i-,\cont)\coloneqq-b^l_i+\tau \log(\pi^l(\cont|i+))& \\
R^l(l,i-,\stop)\coloneqq\tau \log(\pi^l(\stop|i+))
\end{align}
Let $\eta$ denote the choice of $\cont$ or $\stop$ actions. 
Then the expected payoff received by Player Max in the game from time $l$ to time $L$, with
initial state $s$ of the form $(l,i\pm)$, is given by,  \eqref{e-def-vpisigma}
\eqref{game val pol}:
\begin{equation}
\label{vEpol}
V_{l,s,\tau}^{\bm{\pi},\bm{\sigma}}(x) = E^{\bm{\pi},\bm{\sigma}}\Big(R^l(s_l,\eta)+\gamma^l R^{l+1}(s_{l+1},\eta)+ \dots + (\prod_{k=l}^{L-2}\gamma^k) R^{L-1}(s_{l-1},\eta) + (\prod_{k=l}^{L-1} \gamma^k) \phi_{L,i_L}(x) |s_l=s\Big)
\end{equation}

Note that in the ReLU game
a stopping action has no reward while in the Softplus game a stopping action 
 with a certain policy probability has a reward given by the log of that  probability.

 Therefore we add in the notation for the reward explicitely the actions $\eta$.

 In the ReLU game 
the action does not appear explicitly in the  expectation value since the sequence of states a game trajectories goes through fully determines the actions and rewards.

Let us now consider the Shapley-Bellman recursion.

\paragraph{\textbf{Max Player}}
We first put 

\begin{equation}
Q^l_{i+,\tau}\coloneqq\gamma^l_i \Big[\sum_{W_{i,j} \geq 0} P^l_{i+,j+} V^{l+1}_{j+,\tau} +
        \sum_{W_{i,j} \leq 0} P^l_{i+,j-} V^{l+1}_{j-,\tau}\Bigr]
        +b^l_i
\end{equation}
Notice that 
$$\pi^l(\cont|i_+)Q^l_{i+,\tau}-
\tau \pi^l(\cont|i_+)\log(\pi^l(\cont|i_+))$$ is the expected reward for continuing, if the \textit{Max} player is at $(l,i+)$ and 
$$\pi^l(\stop|i_+)0 -\tau \pi^l(\stop|i_+)\log\pi^l(\stop|i_+)$$ is the expected reward for stopping. 

Now consider the Shannon entropy
\begin{equation}
    H(\pi^l_{i+})\coloneqq-\pi^l(\cont|i_+)\log(\pi^l(\cont|i_+)) -\pi^l(\stop|i_+)\log(\pi^l(\stop|i_+))
\end{equation}
Then we have the Shapley-Bellman equation 
\begin{equation}
        V^l_{i+,\tau}=
        \max_{\pi^l(\cdot|i+)\in \Delta^1}\Bigl(\pi^l(\cont|i_+)Q^l_{i+,\tau}+\pi^l(\stop|i+)0 +\tau  H(\pi^l_{i+})\Bigr)
\end{equation}
 Therefore from $\eqref{eq:legendre max}$ we get
   \begin{equation}
        V^l_{i+,\tau}=\tau \log(1+\exp(\frac{\gamma^l_i \Bigl[\sum_{W_{i,j} \geq 0} P^l_{i+,j+} V^{l+1}_{j+,\tau} +
        \sum_{W_{i,j} \leq 0} P^l_{i+,j-} V^{l+1}_{j-,\tau}\Bigr]
        +b^l_i}{\tau}))\label{e-sb-legendre}
    \end{equation}
    we can also write
  \begin{equation}
  \label{VEMax}
        V^l_{i+,\tau}=\tau \log(1+\exp(\frac{Q^l_{i+,\tau}}{\tau}))
    \end{equation}

The optimal policy for this player is given by the Gibbs policy:
\begin{equation}
\pi^l(\cont| i+)
=\frac{\exp \bigl(\tfrac{Q^l_{i+,\tau}}{\tau}\bigr)}
{1+\exp(\frac{Q^l_{i+,\tau}}{\tau})}
\end{equation}

\paragraph{\textbf{Min player}}
Analogously we put
\begin{equation}
Q^l_{i-,\tau}=\gamma^l_i \Bigl[\sum_{W_{i,j} \geq 0} P^l_{i-,j-} V^{l+1}_{j-,\tau} +
        \sum_{W_{i,j} \leq 0} P^l_{i-,j+} V^{l+1}_{j+,\tau}\Bigr]
        -b^l_i
\end{equation}
\begin{equation}
    H(\sigma^l_{i-})\coloneqq-\sigma^l(\cont|i-)\log(\sigma^l(\cont|i-)) -\sigma^l(\stop|i-)\log(\sigma^l(\stop|i-))
\end{equation}

Then
\begin{equation}
        V^l_{i-,\tau}=
        \min_{\sigma(\cdot|i-)\in \Delta^1}\Bigl(\sigma(\cont|i-)Q^l_{i-,\tau}+\sigma(\stop|i-)0 -\tau  H(\sigma^l_{i-})\Bigr)
\end{equation}

Therefore

 \begin{equation}
        V^l_{i-,\tau}=-\tau \log\Bigl(1+\exp(-\frac{\gamma^l_i [\sum_{W_{i,j} \geq 0} P^l_{i-,j-} V^{l+1}_{j-,\tau} +
        \sum_{W_{i,j} \leq 0} P^l_{i-,j+} V^{l+1}_{j+,\tau}]
        -b^l_i}{\tau})\Bigr)
      \label{e-sb-legendre-dual}
    \end{equation}

Or equivalently

 \begin{equation}
 \label{VEMin}
        V^l_{i-,\tau}=-\tau \log(1+\exp(-\frac{Q^l_{i-,\tau}}{\tau})).
    \end{equation}

The optimal policy for the Min  player is given by the Gibbs policy
\begin{equation}
\sigma^l(\cont| i-)
=\frac{\exp \bigl(-\frac{Q^l_{i-,\tau}}{\tau}\bigr)}
{1+\exp(-\frac{Q^l_{i-,\tau}}{\tau})}
\end{equation}

\begin{theorem}\label{theorem 2}
  The value of the $i$th-output of a softplus neural network of depth $L$, on input vector $x$, coincides with the value of the associated discounted turn-based, entropy regularized, stopping game in horizon $L$ with initial state $(i,+)$ and terminal payoff $\phi_{L,\cdot}(x)$.

  More precisely,  let $y^l_{i,\tau}$ be the output of the $i$th neuron in the $l$ layer of the neural network,
  so that the output of the neural net is $y^1_\tau$ and the input is $y^L_\tau$ where $L$ is the number of layers of the neural net; then
\begin{align}
y^l_{i,\tau}=  V^l_{i+,\tau}=-V^l_{i-,\tau} \enspace .
\end{align}
\end{theorem}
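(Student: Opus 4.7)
The plan is to prove \Cref{theorem 2} by backward induction on the layer index $l$, running from $l=L$ (the input layer) down to $l=1$ (the output layer), in complete analogy with the proof of \Cref{theorem 1}. The key input is the pair of Shapley-Bellman equations \eqref{e-sb-legendre}--\eqref{e-sb-legendre-dual} already derived in Section 8 for the entropy-regularized turn-based stopping game, which are the softmax analogues of the ReLU Shapley-Bellman equations~\eqref{e-Val plus}--\eqref{e-Val minus}.

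The base case is immediate from the boundary condition of the game: $V^L_{i+,\tau}(x)=\phi_{L,i+}(x)=x_i$ and $V^L_{i-,\tau}(x)=\phi_{L,i-}(x)=-x_i$, while $y^L_{i,\tau}=x_i$ by convention (the input layer is not passed through the softplus). For the inductive step, assume that $V^{l+1}_{j+,\tau}=y^{l+1}_{j,\tau}$ and $V^{l+1}_{j-,\tau}=-y^{l+1}_{j,\tau}$ for every neuron $j$ in layer $l+1$. Substituting these identities into the definition of $Q^l_{i+,\tau}$ and using the transition probabilities $P^l_{i+,j+}=(W^l_{i,j})^+/\gamma^l_i$ and $P^l_{i+,j-}=(W^l_{i,j})^-/\gamma^l_i$, the factor $\gamma^l_i$ cancels and the sum collapses via $(W^l_{i,j})^+-(W^l_{i,j})^-=W^l_{i,j}$ to
\begin{equation*}
Q^l_{i+,\tau}=\sum_j W^l_{i,j}\,y^{l+1}_{j,\tau}+b^l_i \enspace.
\end{equation*}
Plugging this into \eqref{VEMax} and recognizing the softplus function $\phi_\tau(z)=\tau\log(1+e^{z/\tau})$ yields $V^l_{i+,\tau}=\phi_\tau(\sum_j W^l_{i,j}y^{l+1}_{j,\tau}+b^l_i)=y^l_{i,\tau}$, which is exactly the softplus forward recursion implicit in~\eqref{SoftNNo}.

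For the minus side, the same substitution combined with the symmetries $P^l_{i-,j-}=P^l_{i+,j+}$ and $P^l_{i-,j+}=P^l_{i+,j-}$ gives $Q^l_{i-,\tau}=-Q^l_{i+,\tau}$. The dual Shapley-Bellman equation~\eqref{VEMin} then reads $V^l_{i-,\tau}=-\tau\log(1+\exp(Q^l_{i+,\tau}/\tau))=-V^l_{i+,\tau}=-y^l_{i,\tau}$, closing the induction. Iterating down to $l=1$ identifies the game value at state $(1,i,+)$ with the $i$th output of the softplus network.

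I do not anticipate a genuine obstacle: the conceptual work was front-loaded in Sections 6 and 7, where Legendre-Fenchel duality between Shannon entropy and log-sum-exp was used to rewrite the entropy-regularized Shapley operator in softmax form. Once those closed-form recursions are in place, the argument is a mechanical mirror of the ReLU proof, with $\max(0,\cdot)$ replaced by $\phi_\tau(\cdot)$. The only point worth flagging in the writeup is the sign bookkeeping in the minus case, where one must verify that the $-b^l_i$ bias, the sign flip of the inductive values $V^{l+1}_{j\pm,\tau}$, and the ``dual'' form $-\tau\log(1+e^{-Q/\tau})$ combine consistently to yield exactly $-y^l_{i,\tau}$ rather than some related but incorrect quantity.
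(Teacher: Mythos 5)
Your proposal is correct and follows essentially the same route as the paper's proof: backward induction on the layer, with the boundary condition as base case, the cancellation of $\gamma^l_i$ giving $Q^l_{i+,\tau}=\sum_j W^l_{i,j}y^{l+1}_{j,\tau}+b^l_i$, and \eqref{VEMax} identifying this with the softplus forward step. Your explicit verification that $Q^l_{i-,\tau}=-Q^l_{i+,\tau}$ merely spells out what the paper dismisses as ``dual.''
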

\begin{proof}
We prove this by induction. We have  trivially $V^L_{i+,\tau}(x)=x_i=y^L_i$ and $V^L_{i-,\tau}(x)=-x_i=-y^L_i$. 

Moreover, assume
$V^{l+1}_{j+,\tau}(x)=y^{l+1}(x)$ and $V^{l+1}_{j-,\tau}(x)=-y^{l+1}(x)$.
We have that
\begin{align*}
Q^l_{i+,\tau}& \coloneqq\gamma^l_i [\sum_{W_{i,j} \geq 0} P^l_{i+,j+} V^{l+1}_{j+,\tau} +
        \sum_{W_{i,j} \leq 0} P^l_{i+,j-} V^{l+1}_{j-,\tau}]
+b^l_i\\
&=\sum_j W^l_{i,j} y^{l+1}_j +b^l_i
\end{align*}
and it follows from~\eqref{VEMax} that $V^{l}_{i,+}=y^{l}_i$. 

The proof
that $V^{l}_{i,-}=-y^{l}_i$ is dual.

\end{proof}

Letting $\tau\to 0$ in the Shapley-Bellman equations~\eqref{e-sb-legendre},\eqref{e-sb-legendre-dual} of the entropy regularized game, we deduce
that the value of the ReLU net game is the limit of the value of the entropy regularized
game,  i.e., $\lim_{\tau\to 0}V^l_{i\pm,\tau}=V^l_{i\pm}$.
\bibliographystyle{alpha}
\bibliography{games}

@preamble{
   "\def\cprime{$'$} "
}

@ARTICLE{Maragos1,
  author={Maragos, Petros and Charisopoulos, Vasileios and Theodosis, Emmanouil},
  journal={Proceedings of the IEEE}, 
  title={Tropical Geometry and Machine Learning}, 
  year={2021},
  volume={109},
  number={5},
  pages={728-755},
  doi={10.1109/JPROC.2021.3065238}}

@misc{Maragos2,
      title={A Tropical Approach to Neural Networks with Piecewise Linear Activations}, 
      author={Vasileios Charisopoulos and Petros Maragos},
      year={2019},
      note={arXiv:1805.08749},
      archivePrefix={arXiv},
      primaryClass={stat.ML}
}

@misc{Maragos3,
      title={Tropical Polynomial Division and Neural Networks}, 
      author={Georgios Smyrnis and Petros Maragos},
      year={2019},
      note={arXiv:1911.12922},
      archivePrefix={arXiv},
      primaryClass={cs.LG}
}

@article{Kordonis2025,
  title = {Revisiting Tropical Polynomial Division: Theory,  Algorithms,  and Application to Neural Networks},
  volume = {36},
  ISSN = {2162-2388},
  url = {http://dx.doi.org/10.1109/TNNLS.2025.3570807},
  DOI = {10.1109/tnnls.2025.3570807},
  number = {9},
  journal = {IEEE Transactions on Neural Networks and Learning Systems},
  publisher = {Institute of Electrical and Electronics Engineers (IEEE)},
  author = {Kordonis,  Ioannis and Maragos,  Petros},
  year = {2025},
  month = sep,
  pages = {15978–15992}
}

@inproceedings{zhang2018,
  author       = {Liwen Zhang and
                  Gregory Naitzat and
                  Lek{-}Heng Lim},
  editor       = {Jennifer G. Dy and
                  Andreas Krause},
  title        = {Tropical Geometry of Deep Neural Networks},
  booktitle    = {Proceedings of the 35th International Conference on Machine Learning,
                  {ICML} 2018, Stockholmsm{\"{a}}ssan, Stockholm, Sweden, July
                  10-15, 2018},
  series       = {Proceedings of Machine Learning Research},
  volume       = {80},
  pages        = {5819--5827},
  publisher    = {{PMLR}},
  year         = {2018},
  url          = {http://proceedings.mlr.press/v80/zhang18i.html},
  timestamp    = {Wed, 03 Apr 2019 18:17:30 +0200},
  biburl       = {https://dblp.org/rec/conf/icml/ZhangNL18.bib},
  bibsource    = {dblp computer science bibliography, https://dblp.org}
}

@book{sorin_repeated_games,
title = {Repeated games},
author = {Mertens, J.-F. and Sorin, S. and Zamir, S.},
fseries = {Econometric Society Monographs},
series = {Econom. Soc. Monogr.},
year = {2015},
volume = {55},
publisher = {Cambridge University Press},
address = {Cambridge},
mydoi = {10.1017/CBO9781139343275},
}

@article{shapley_stochastic,
title = {Stochastic games},
author = {Shapley, L. S.},
fjournal = {Proceedings of the National Academy of Sciences of the United States of America},
journal = {Proc. Natl. Acad. Sci. USA},
volume = {39},
number = {10},
year = {1953},
pages = {1095--1100},
mydoi = {10.1073/pnas.39.10.1095},
}

@article{ovchinnikov,
author = {Ovchinnikov, S.},
title = {Max-min representations of piecewise linear functions},
fjournal = {Beitr{\"a}ge zur Algebra und Geometrie},
journal = {Beitr. Algebra Geom.},
volume = {43},
number = {1},
year = {2002},
pages = {297--302},
url = {http://eudml.org/doc/225460},
}

@INCOLLECTION{kolokoltsov,
author={V. Kolokoltsov},
title={Linear additive and homogeneous operators},
booktitle={Idempotent analysis},
series={Adv. in Sov. Math.},
volume={13},
publisher={AMS},
address={RI},
year={1992},
}

@BOOK{maslovkololtsov95,
author={V. Kolokoltsov and V Maslov},
title={Idempotent analysis and applications},
publisher={Kluwer Acad. Publisher},
year={1997},
}

@book{Bertsekas_Tsitsiklis_1996,
  author    = {Dimitri P. Bertsekas and John N. Tsitsiklis},
  title     = {Neuro-Dynamic Programming},
  publisher = {Athena Scientific},
  address   = {Belmont, MA},
  year      = {1996},
  isbn      = {1-886529-10-8, 978-1-886529-10-6}
}

@InProceedings{pmlr-v97-geist19a,
  title     = {A Theory of Regularized Markov Decision Processes},
  author    = {Matthieu Geist and Bruno Scherrer and Olivier Pietquin},
  booktitle = {Proceedings of the 36th International Conference on Machine Learning},
  pages     = {2160--2169},
  year      = {2019},
  editor    = {Kamalika Chaudhuri and Ruslan Salakhutdinov},
  volume    = {97},
  series    = {Proceedings of Machine Learning Research},
  month     = {09--15 Jun},
  publisher = {PMLR},
  pdf       = {http://proceedings.mlr.press/v97/geist19a/geist19a.pdf},
  url       = {https://proceedings.mlr.press/v97/geist19a.html}
}

@INPROCEEDINGS{goubault2,
author={S. Gaubert and E. Goubault and A. Taly and S. Zennou},
title={Static Analysis by Policy Iteration in Relational Domains},
year={2007},
booktitle={Proceedings of the Proc. of the 16th European Symposium on Programming (ESOP'07)},
address={Braga (Portugal)},
date={24 March - 1 April, 2007},
publisher={Springer},
series={LNCS},
volume=4421,
pages={237--252},
doi={10.1007/978-3-540-71316-6\_17},
}

@Article{CC:77,
author = {P. Cousot and R. Cousot},
title = {Abstract Interpretation: A unified lattice model for
    static analysis of programs by construction of
    approximations of fixed points},
journal = {Principles of Programming Languages 4},
pages = {238--252},
year = {1977}   
}

@InProceedings{SSM:05,
 author = {S. Sankaranarayanan and H. Sipma and Z. Manna}, 
 title = {Scalable Analysis of Linear Systems using Mathematical Programming}, 
 booktitle = {VMCAI}, 
 volume = 3385,
 series = {LNCS}, 
 year = 2005
}

@incollection{adjegaubertgoubault10,
author={A. Adje and S. Gaubert and E. Goubault},
title={Coupling policy iteration with semi-definite relaxation to compute accurate numerical invariants in static analysis},
booktitle={Proceedings of the 19th European Symposium on Programming (ESOP 2010)},
series={Lecture Notes in Computer Science},
number={6012},
publisher={Springer},
pages={23--42},
year=2010,
doi={10.1007/978-3-642-11957-6_3},
}

@BOOK{whittle86,
author={P. Whittle},
title={Optimization over Time},
publisher={Wiley},
year={1986},
}

@book{puterman2014markov,
  added-at = {2017-04-07T12:13:11.000+0200},
  author = {M. L. Puterman},
  biburl = {https://www.bibsonomy.org/bibtex/22e7ac99cd30c4892171e5a7cef1bc7a7/becker},
  interhash = {6cec8f775a265d8741171d17e4a4e7d0},
  intrahash = {2e7ac99cd30c4892171e5a7cef1bc7a7},
  keywords = {inthesis diss markov chain decision process citedby:scholar:count:9594 citedby:scholar:timestamp:2017-4-7},
  publisher = {John Wiley \& Sons},
  timestamp = {2017-04-07T12:13:11.000+0200},
  title = {Markov decision processes: discrete stochastic dynamic programming},
  year = 2014
}

@book{solan,
author={E. Solan},
title={A Course in Stochastic Game Theory},
publisher={Cambridge University Press},
year={2022},
}

@article{Min2017,
  title = {Tutorial on Static Inference of Numeric Invariants by Abstract Interpretation},
  volume = {4},
  ISSN = {2325-1131},
  url = {http://dx.doi.org/10.1561/2500000034},
  DOI = {10.1561/2500000034},
  number = {3–4},
  journal = {Foundations and Trends in Programming Languages},
  publisher = {Emerald},
  author = {Miné,  Antoine},
  year = {2017},
  month = dec,
  pages = {120–372}
}

@article{Kohn2005,
  title = {A deterministic‐control‐based approach motion by curvature},
  volume = {59},
  ISSN = {1097-0312},
  url = {http://dx.doi.org/10.1002/cpa.20101},
  DOI = {10.1002/cpa.20101},
  number = {3},
  journal = {Communications on Pure and Applied Mathematics},
  publisher = {Wiley},
  author = {Kohn,  Robert and Serfaty,  Sylvia},
  year = {2005},
  month = aug,
  pages = {344–407}
}

@article{Peres2008,
  title = {Tug-of-war and the infinity Laplacian},
  volume = {22},
  ISSN = {1088-6834},
  url = {http://dx.doi.org/10.1090/S0894-0347-08-00606-1},
  DOI = {10.1090/s0894-0347-08-00606-1},
  number = {1},
  journal = {Journal of the American Mathematical Society},
  publisher = {American Mathematical Society (AMS)},
  author = {Peres,  Yuval and Schramm,  Oded and Sheffield,  Scott and Wilson,  David},
  year = {2008},
  month = jul,
  pages = {167–210}
}

@article{evans,
author={Evans, Lawrence C},
title={Some Min-Max Methods for the {H}amilton-{J}acobi Equation},
journal={Indiana University Mathematics Journal},
volume=33,
number=1,
year=1984,
pages={31--50},
}

@article{1605.04518,
Author = {Marianne Akian and St{\'e}phane Gaubert and Antoine Hochart},
Title = {Minimax representation of nonexpansive functions and application to zero-sum recursive games},
Year = {2018},
Eprint = {1605.04518},
journal={Journal of Convex Analysis},
number=1,
}

@article{montufar,
author = {Mont\'{u}far, Guido and Ren, Yue and Zhang, Leon},
title = {Sharp Bounds for the Number of Regions of Maxout Networks and Vertices of Minkowski Sums},
journal = {SIAM Journal on Applied Algebra and Geometry},
volume = {6},
number = {4},
pages = {618-649},
year = {2022},
doi = {10.1137/21M1413699},

URL = { 
    
        https://doi.org/10.1137/21M1413699
    
    

},
eprint = { 
    
        https://doi.org/10.1137/21M1413699
    
    

}
,
    abstract = { Abstract. We present results on the number of linear regions of the functions that can be represented by artificial feedforward neural networks with maxout units. A rank- \(k\) maxout unit is a function computing the maximum of \(k\) linear functions. For networks with a single layer of maxout units, the linear regions correspond to the upper vertices of a Minkowski sum of polytopes. We obtain face counting formulas in terms of the intersection posets of tropical hypersurfaces or the number of upper faces of partial Minkowski sums, along with explicit sharp upper bounds for the number of regions for any input dimension, any number of units, and any ranks, in cases with and without biases. Based on these results we also obtain asymptotically sharp upper bounds for networks with multiple layers. }
}

@inproceedings{levine,
author = {Levine, Sergey and Koltun, Vladlen},
title = {Continuous inverse optimal control with locally optimal examples},
year = {2012},
isbn = {9781450312851},
publisher = {Omnipress},
address = {Madison, WI, USA},
abstract = {Inverse optimal control, also known as inverse reinforcement learning, is the problem of recovering an unknown reward function in a Markov decision process from expert demonstrations of the optimal policy. We introduce a probabilistic inverse optimal control algorithm that scales gracefully with task dimensionality, and is suitable for large, continuous domains where even computing a full policy is impractical. By using a local approximation of the reward function, our method can also drop the assumption that the demonstrations are globally optimal, requiring only local optimality. This allows it to learn from examples that are unsuitable for prior methods.},
booktitle = {Proceedings of the 29th International Coference on International Conference on Machine Learning},
pages = {475–482},
numpages = {8},
location = {Edinburgh, Scotland},
series = {ICML'12}
}

@book{Norris1997MarkovChains,
  author    = {Norris, J. R.},
  title     = {Markov Chains},
  publisher = {Cambridge University Press},
  year      = {1997},
  series    = {Cambridge Series in Statistical and Probabilistic Mathematics},
  address   = {Cambridge},
  isbn      = {9780521633963}
}

@inproceedings{crown,
  author       = {Shiqi Wang and
                  Huan Zhang and
                  Kaidi Xu and
                  Xue Lin and
                  Suman Jana and
                  Cho{-}Jui Hsieh and
                  J. Zico Kolter},
  editor       = {Marc'Aurelio Ranzato and
                  Alina Beygelzimer and
                  Yann N. Dauphin and
                  Percy Liang and
                  Jennifer Wortman Vaughan},
  title        = {Beta-CROWN: Efficient Bound Propagation with Per-neuron Split Constraints
                  for Neural Network Robustness Verification},
  booktitle    = {Advances in Neural Information Processing Systems 34: Annual Conference
                  on Neural Information Processing Systems 2021, NeurIPS 2021, December
                  6-14, 2021, virtual},
  pages        = {29909--29921},
  year         = {2021},
  url          = {https://proceedings.neurips.cc/paper/2021/hash/fac7fead96dafceaf80c1daffeae82a4-Abstract.html},
  timestamp    = {Tue, 11 Mar 2025 09:17:48 +0100},
  biburl       = {https://dblp.org/rec/conf/nips/WangZXLJHK21.bib},
  bibsource    = {dblp computer science bibliography, https://dblp.org}
}

@article{Huangsurvey2020,
  title = {A survey of safety and trustworthiness of deep neural networks: Verification,  testing,  adversarial attack and defence,  and interpretability},
  volume = {37},
  ISSN = {1574-0137},
  url = {http://dx.doi.org/10.1016/j.cosrev.2020.100270},
  DOI = {10.1016/j.cosrev.2020.100270},
  journal = {Computer Science Review},
  publisher = {Elsevier BV},
  author = {Huang,  Xiaowei and Kroening,  Daniel and Ruan,  Wenjie and Sharp,  James and Sun,  Youcheng and Thamo,  Emese and Wu,  Min and Yi,  Xinping},
  year = {2020},
  month = aug,
  pages = {100270}
}

@book{Aliprantis,
  author    = {Charalambos D. Aliprantis and Kim C. Border},
  title     = {Infinite Dimensional Analysis: A Hitchhiker's Guide},
  edition   = {3},
  publisher = {Springer Berlin Heidelberg},
  year      = {2006},
  isbn      = {978-3-540-29587-7},
  doi       = {10.1007/3-540-29587-9},
}

@book{Lasserre,
  author    = {On{\'e}simo Hern{\'a}ndez‐Lerma and Jean B. Lasserre},
  title     = {Discrete‐Time Markov Control Processes: Basic Optimality Criteria},
  series    = {Stochastic Modelling and Applied Probability},
  volume    = {30},
  publisher = {Springer New York},
  year      = {1996},
  isbn      = {0-387-94579-2},
  doi       = {10.1007/978-1-4612-0729-0},
}

\end{document}